\RequirePackage{fix-cm}
\documentclass{article} % For LaTeX2e
\usepackage[OT1]{fontenc}
\usepackage{preprint,times}

\usepackage{amsmath,amsfonts,bm}

\def\eqref#1{equation~\ref{#1}}
\def\1{\bm{1}}

\DeclareMathAlphabet{\mathsfit}{\encodingdefault}{\sfdefault}{m}{sl}
\SetMathAlphabet{\mathsfit}{bold}{\encodingdefault}{\sfdefault}{bx}{n}

\usepackage{xcolor}
\usepackage{graphicx}
\usepackage{wrapfig}
\usepackage{amssymb}
\usepackage{booktabs}
\usepackage{multirow}
\usepackage{longtable}
\usepackage{colortbl}
\usepackage{amsthm}
\usepackage{amsmath}
 
\usepackage{calc}
\usepackage{caption}      % \captionof 需要
\usepackage{needspace}
\usepackage{placeins}
\usepackage{float}
\definecolor{citeblue}{HTML}{16A6C9}

\newtheoremstyle{quietprop}
  {3pt}{3pt}            % 上方、下方间距
  {\normalfont}{}       % 正文字体、缩进
  {\itshape}{.}         % 标题字体、标题后标点
  { }{}                 % 标题后空格、标题格式
\theoremstyle{quietprop}       
\newtheorem*{proposition}{Proposition}

\usepackage[
    colorlinks=true,
    citecolor=citeblue,
    linkcolor=citeblue,
    urlcolor=citeblue
]{hyperref}

\newsavebox{\berbox}   % Figure 6 graphic, measured for the adjacent table

\usepackage{enumitem}
\usepackage{capt-of}   % \captionof: the table sharing Figure 6's float
\usepackage{makecell}  % two-line column headings in the narrow Table 2

\usepackage{mathtools}  % mathtools 提供 \coloneqq

\theoremstyle{plain}
\theoremstyle{remark}

\usepackage{cleveref}
\usepackage{url}
\usepackage[breakable]{tcolorbox}
\DeclareMathSizes{7.5}{7.5}{5}{5}

\definecolor{takeawaybg}{RGB}{235,243,255}
\definecolor{takeawayline}{RGB}{79,129,220}

\newcommand{\projectname}{\emph{\textbf{E\textsuperscript{2}-OPSD}}}

\title{
  E\textsuperscript{2}-OPSD: Taming Entropy Overshoot in\\
  On-Policy Self-Distillation}

\author{%
Yifei Liu$^{1*}$,
Minghao Fang$^{2*}$,
Xinyu Gu$^{3}$,
Chengkai Yao$^{4}$,
Mengdi Liu$^{5}$, \\
\bfseries Tengfei Ma$^{6}$,
Jiangbin Zheng$^{7}$,
Chang Yu$^{8}$,
Zhangyang Gao$^{3\dagger}$ \\[0.6ex]
\normalfont\small $^{1}$The Chinese University of Hong Kong \quad
$^{2}$Zhejiang University \\
\normalfont\small $^{3}$Shanghai Artificial Intelligence Laboratory \quad
$^{4}$Shanghai Jiaotong University \\
\normalfont\small $^{5}$Visual Information Processing and Learning, ICT, CAS \quad
$^{6}$Hunan University \\
\normalfont\small $^{7}$Westlake University \quad
$^{8}$Nanjing University%
}

\hypersetup{pdftitle={E2-OPSD: Taming Entropy Overshoot in On-Policy Self-Distillation},
    pdfauthor={Yifei Liu, Minghao Fang, Xinyu Gu, Chengkai Yao, Mengdi Liu, Tengfei Ma, Jiangbin Zheng, Chang Yu, Zhangyang Gao},pdfsubject={}}

\begin{document}

\maketitle

{%
  \renewcommand{\thefootnote}{\fnsymbol{footnote}}%
  \footnotetext[1]{\mbox{Equal contribution.\quad
    \textsuperscript{\ensuremath{\dagger}}Corresponding author.}}%
}

\begin{abstract}
On-policy self-distillation (OPSD) provides dense token-level supervision without a second model: one network acts as teacher with the reference solution and as student with only the problem. We identify a specific failure mode of this recipe. During training, student token entropy rises past the teacher's and remains elevated, a pattern we call \emph{\textbf{entropy overshoot}}. We trace it to both sides of distillation. The reference-conditioned teacher is confident along its answer-directed reasoning path, but this confidence transfers poorly to student-generated prefixes, making its supervision overly tied to answer-specific cues rather than reusable reasoning patterns; meanwhile, the forward KL used by OPSD continually diffuses the student's predictive distribution without pulling it back. We introduce \projectname{} to address both causes. \textbf{Exemplar-guided teaching} replaces the current answer with a retrieved solved neighboring problem, providing transferable reasoning guidance without revealing the destination and better matching student-reachable states. \textbf{Entropy-aware distillation} uses the student--teacher entropy gap to determine the direction and strength of each token's correction. \projectname{} improves math reasoning by up to 4.3 points in mean@16 over OPSD, while out-of-domain evaluations show gains over the corresponding base models of up to 4.9 points in mean@16 and 5.5 points in pass@8. Despite these gains, \projectname{} remains simple, requiring no additional forward passes or networks.
Code repo: \url{https://github.com/orionfei/E2-OPSD}.
\end{abstract}

\section{Introduction}
\label{sec:introduction}

% Figure 1 is a wrapped inset (text flows around it on the left) to save space.
% wrapfigure must start at a paragraph boundary. It sits one paragraph above the
% first citation so the caption lands inside the running text rather than on the
% page edge; move it down a paragraph if it drifts off the top of the page.
% Both panels are rendered on the same canvas with the same axes rectangle
% (AXES_RECT in the two plotting scripts), so their plot boxes line up here.

\begin{wrapfigure}{r}{0.47\textwidth}
    \centering
    % Produced by plotting_scripts/plot_qwen3_1_7b_entropy.py from
    % raw source data/entropy_length_qwen3_1_7b_opsd_base0811.json.
    \IfFileExists{figures/entropy_qwen3_1_7b.pdf}{%
        \includegraphics[width=0.485\linewidth]{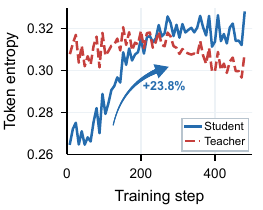}%
    }{\fbox{\rule{0pt}{0.9in}\rule{0.46\linewidth}{0pt}}}%
    \hfill
    % Produced by plotting_scripts/plot_qwen3_1_7b_entropy_bars.py from the
    % teacher-context probe (raw source data/teacher_context_probe_qwen3_1_7b_l57.json).
    \IfFileExists{figures/entropy_bars_qwen3_1_7b.pdf}{%
        \includegraphics[width=0.485\linewidth]{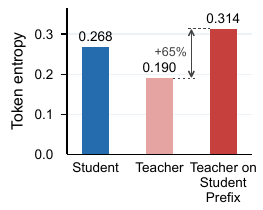}%
    }{\fbox{\rule{0pt}{0.9in}\rule{0.46\linewidth}{0pt}}}
    \caption{\textbf{Left:} Entropy overshoot in Qwen3-1.7B. \textbf{Right:} The reference-conditioned teacher is less certain on student prefixes, suggesting a reasoning-path mismatch.}
    \label{fig:entropy-dynamics}
\end{wrapfigure}

Reinforcement learning with verifiable rewards (RLVR) is now the standard recipe for improving LLM reasoning, yet its supervision remains coarse: an entire rollout collapses into a single scalar reward \citep{shao2024deepseekmath,guo2025deepseek}. 
On-policy distillation (OPD) instead offers dense, token-level supervision by having a teacher score the student's own rollouts \citep{agarwal2024policy,gu2023minillm}, and is now a standard post-training component in Qwen3~\citep{Yang2025Qwen3TR}, GLM-5~\citep{Zeng2026GLM5FV}, and DeepSeek-V4~\citep{DeepSeekAI2026DeepSeekV4TH}. Despite this fine granularity, it still requires a stronger open-weight teacher that shares the student's vocabulary. On-policy self-distillation (OPSD) further removes the need to serve a second model: a copy of the student acts as the teacher, with access to the reference solution, while the student sees only the problem. Teacher and student are evaluated on the same student-generated prefixes and matched with forward KL \citep{zhao2026self}. This makes OPSD an appealing way to turn privileged solution information into dense supervision. In practice, however, its gains are unstable and can deteriorate as training proceeds.

\begin{figure*}[!t]
    \centering
    % Equal heights without clipping: widths follow the two images' aspect ratios
    % ((a) 2172x724 px = 3.00, (b) 1254x1254 px = 1.00), so 0.735 : 0.245 gives both a height of
    % 0.245\textwidth. [b] alignment puts the two sublabels on one baseline.
    \begin{minipage}[b]{0.735\textwidth}
        \centering
        \includegraphics[width=\linewidth]{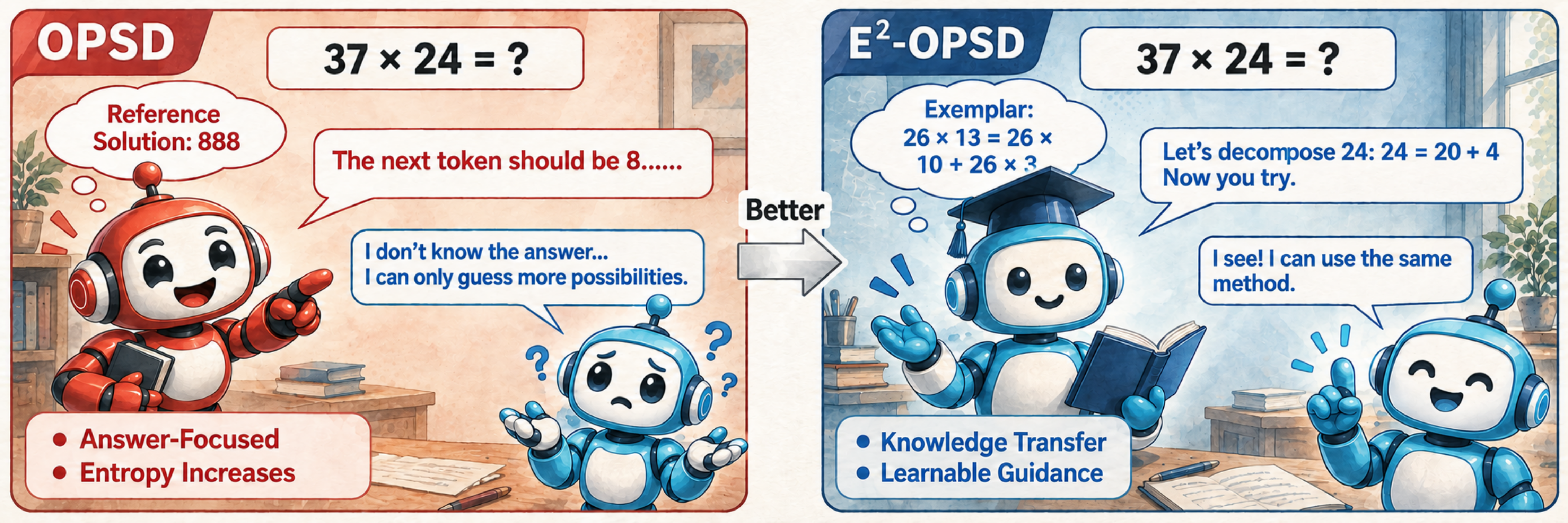}\\[3pt]
        \normalsize (a) Exemplar-Guided Teaching
    \end{minipage}\hfill
    \begin{minipage}[b]{0.245\textwidth}
        \centering
        \includegraphics[width=\linewidth]{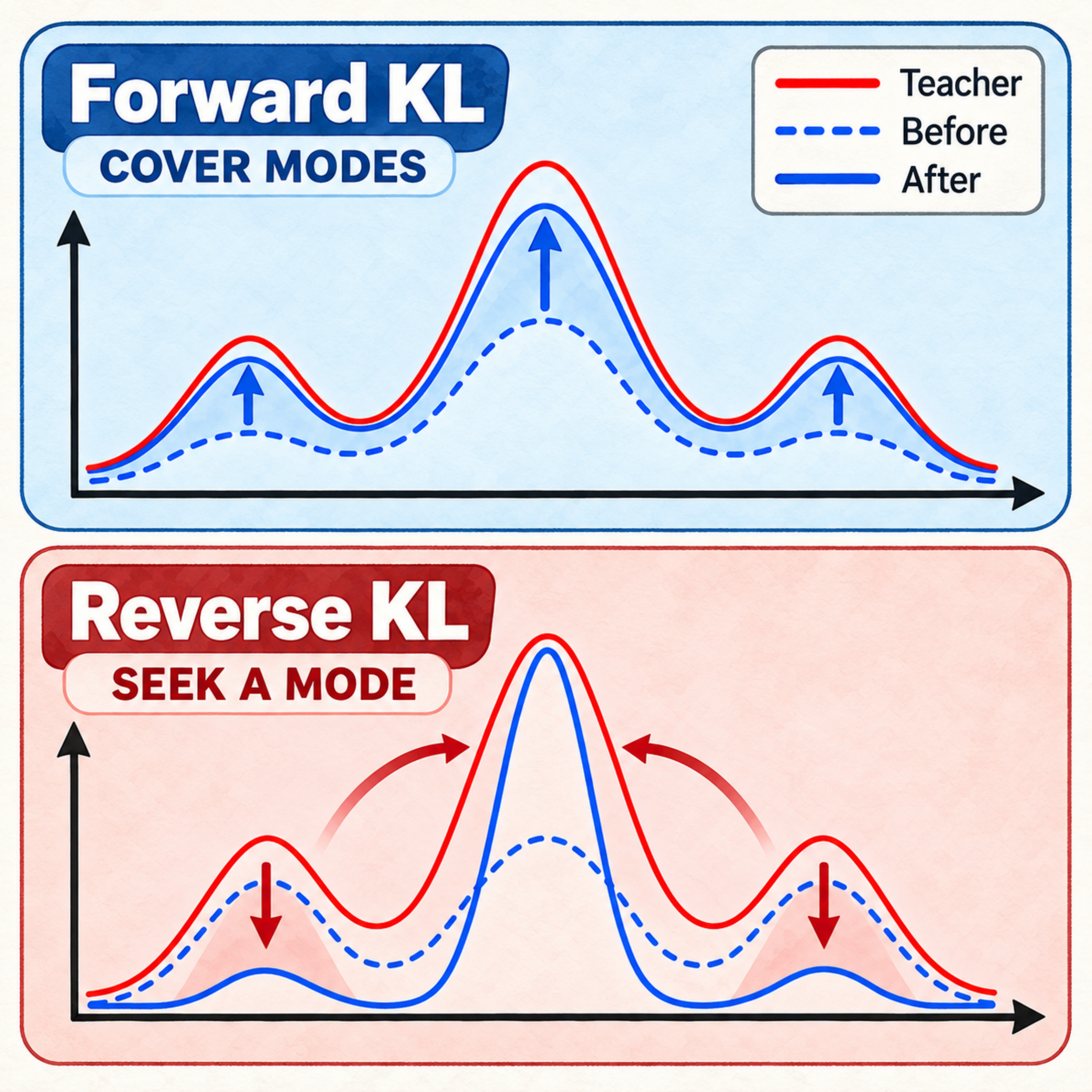}\\[3pt]
        \makebox[\linewidth][r]{\normalsize (b) Forward/Reverse KL}
    \end{minipage}
    \caption{\textbf{(a) Exemplar-Guided Teaching.} Standard OPSD gives the teacher the current reference solution, letting it jump from the student prefix toward the answer. \projectname{} instead uses a solved neighbor as a strategy guide, forcing the teacher to follow the student's intermediate step and solve the current problem itself, turning answer leakage into strategy transfer. \textbf{(b) Forward and reverse KL play complementary roles.} Forward KL promotes exploration over teacher-supported alternatives, while reverse KL removes unsupported probability mass to concentrate the policy. Entropy-aware distillation switches between them.}
    \label{fig:egt-illustration}
\end{figure*}

We identify a characteristic failure mode, as shown in Figure~\ref{fig:entropy-dynamics} (left),  \emph{\textbf{entropy overshoot}}: instead of settling toward the teacher, the student becomes progressively more uncertain, eventually surpassing the teacher's entropy and staying there. Entropy overshoot therefore describes a regime where useful exploration spills into persistent and excessive uncertainty, gradually eroding the student's decisiveness and amplifying teacher-side uncertainty into noisy student behavior.

We examine this failure from both teacher and student perspectives. First, \textbf{OPSD produces supervision mismatched to the student's reasoning path}. The teacher is confident on its own trajectories but becomes substantially more uncertain on student-generated prefixes (Figure~\ref{fig:entropy-dynamics}, right), suggesting that its guidance does not transfer cleanly to the intermediate states the student actually reaches. This mismatch stems from a deeper issue: seeing the current solution lets the teacher narrate toward a known answer rather than reason from scratch, making its supervision answer-specific rather than transferable (Figure~\ref{fig:egt-illustration}(a)). Second, \textbf{OPSD lacks a mechanism to regulate policy spread}. Forward KL is beneficial when the student is too concentrated as it expands probability toward teacher-supported alternatives. But the same pressure persists even after the policy becomes overly diffuse, which makes exploration a one-way ratchet: uncertainty keeps spreading, with no force pulling the policy back into focus (Figure~\ref{fig:egt-illustration}(b)).

These observations motivate \projectname{}, a simple framework that repairs both sides of the learning process with \textbf{E}xemplar-guided teaching and \textbf{E}ntropy-aware distillation, as illustrated in Figure~\ref{fig:framework}. More specifically, \projectname{} makes three contributions:
\begin{itemize}[leftmargin=*]
    \item \textbf{What to learn} --- \emph{Exemplar-guided teaching.} We replace the current problem's reference solution in the teacher's context with a retrieved solved neighboring problem. The exemplar provides a transferable problem-solving reference without revealing the current answer. Conditioned on this reference and the student's generated prefix, the teacher's token-level guidance is meant to be more compatible with the student's ongoing reasoning.
    \item \textbf{How to learn} --- \emph{Entropy-aware distillation.} We use the student--teacher entropy gap as a feedback signal: when the student is more concentrated than the teacher, forward KL encourages exploration; when it becomes more diffuse, reverse KL suppresses unsupported mass and encourages concentration. The gap magnitude downweights uncertain routing decisions and concentrates learning on tokens with larger entropy mismatches. This turns self-distillation from a fixed one-way objective into closed-loop self-balancing control of exploration.
    \item \emph{\textbf{Simple and Generalized.}} \projectname{} adds no extra forward passes during distillation and minimum overhead. Across various model families, sizes from 0.6B to 7B, and both thinking and instruction-tuned models, it mitigates persistent entropy overshoot and improves math reasoning by up to 4.3 points over OPSD, with gains transferring to out-of-domain tasks by up to 4.9 mean@16 points over the corresponding base model. These results suggest that effective self-distillation depends not on maximizing imitation, but on \textbf{aligning teacher guidance with the student's reasoning path and regulating when the student should explore versus concentrate}.
\end{itemize}

\begin{figure*}[t]
    \centering
    \includegraphics[width=\textwidth]{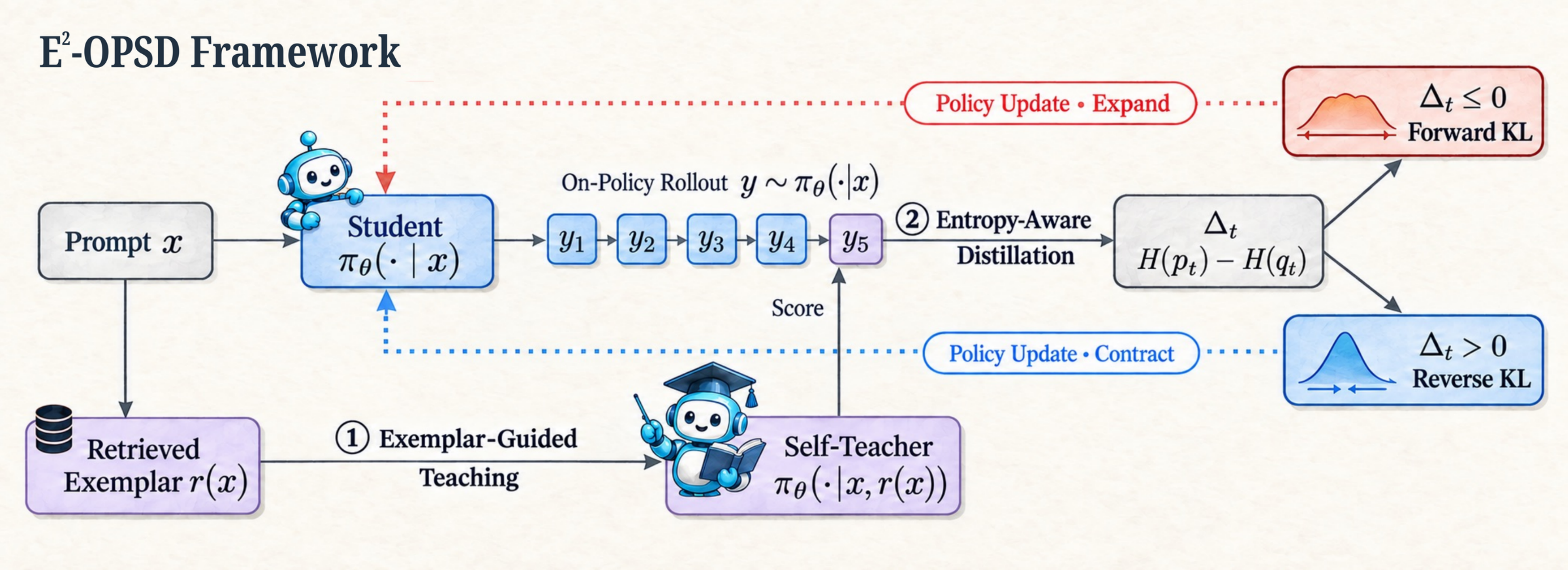}
    \caption{\textbf{Overview of the \projectname{} framework.} Exemplar-guided teaching and entropy-aware distillation jointly improve teacher guidance and regulate student learning.}
    \label{fig:framework}
\end{figure*}

\section{Preliminaries and Related Work}
\label{sec:background}

\subsection{KL Divergence}
\label{sec:kl}

The Kullback--Leibler (KL) divergence is asymmetric, and its two directions
induce distinct approximation biases~\citep{Minka2005DivergenceMA,jin2026entropy}.
For a target distribution $P$ and model distribution $Q$,
\begin{equation}
\mathrm{KL}(P\|Q)
=
\mathbb{E}_{v\sim P}
\left[\log\frac{P(v)}{Q(v)}\right],
\qquad
\mathrm{KL}(Q\|P)
=
\mathbb{E}_{v\sim Q}
\left[\log\frac{Q(v)}{P(v)}\right].
\label{eq:kl-directions}
\end{equation}
Forward KL is \emph{mode covering}: if $P(v)>0$ while $Q(v)\to0$, its
contribution diverges, strongly penalizing target-supported modes that the
model fails to cover. Reverse KL has the complementary \emph{mode-seeking}
bias: if $Q(v)>0$ while $P(v)\to0$, its contribution diverges, strongly
penalizing model mass unsupported by the target~\citep{Minka2005DivergenceMA}.
Thus, forward KL favors coverage, whereas reverse KL favors concentration.

\subsection{On-Policy Distillation}
\label{sec:prelim}

On-policy distillation (OPD) trains a student $\pi_\theta$ to match a teacher
$\pi_T$ on the student's own generations~\citep{agarwal2024policy,gu2023minillm,lu2025onpolicydistillation}.
Given a problem $x\sim\mathcal{D}$, the student samples
$\hat y=(\hat y_1,\ldots,\hat y_{|\hat y|})$, and both models score the same
student-generated prefix $s_t=(x,\hat y_{<t})$:
\begin{equation}
\mathcal{L}(\theta)
=
\mathbb{E}_{x\sim\mathcal{D}}
\mathbb{E}_{\hat y\sim\pi_\theta(\cdot\mid x)}
\left[
\frac{1}{|\hat y|}
\sum_{t=1}^{|\hat y|}
D\!\left(
\pi_T(\cdot\mid s_t)
\,\big\|\,
\pi_\theta(\cdot\mid s_t)
\right)
\right],
\label{eq:opd-objective}
\end{equation}
where $D$ denotes a per-token divergence between the teacher and student
distributions, instantiated as forward KL $\mathrm{KL}(\pi_T\|\pi_\theta)$ or reverse KL
$\mathrm{KL}(\pi_\theta\|\pi_T)$, and gradients flow only through the student. Reverse KL is a common choice~\citep{gu2023minillm,lu2025onpolicydistillation},
while prior work also explores mixed or skewed divergences
as alternative distillation objectives~~\citep{agarwal2024policy,ko2024distillm,Shing2025TAIDTA,wu2025rethinking,jin2026entropy}.
OPSD computes $D$ over the full vocabulary and applies per-token pointwise clipping, capping each vocabulary entry's contribution at $\tau$ so that a few stylistic tokens do not dominate the signal~\citep{zhao2026self}. We keep this pointwise clipping and, following recent sparse-support distillation practice, approximate $D$ on the teacher's top-$K_{\mathrm v}$ tokens together with a tail bucket carrying the remaining probability mass~\citep{li2026rethinkingonpolicydistillationlarge}.
\label{sec:support-set}
\label{sec:kl-clipping}

\subsection{Related Work}
\label{sec:related}

\paragraph{On-policy self-distillation.}
On-policy self-distillation (OPSD) instantiates Eq.~\ref{eq:opd-objective} with a copy of the student as teacher, $\pi_T=\pi_{\theta_0}$, whose context includes the reference solution $y^*$ unseen by the student and is matched via forward KL~\citep{zhao2026self}. This privileged teacher creates an irreducible objective gap~\citep{yang2026self}, overweights stylistic and answer-implied tokens over genuine decisions~\citep{pan2026rlcsd,harne2026privileged}, and suppresses uncertainty important to thinking models~\citep{kaur2026rethinking,kim2026does}. Existing remedies largely patch these effects downstream. \emph{Verifier-directed} methods outsource update control to outcome rewards~\citep{yang2026self,ke2026respecting,he2026self}, effectively reintroducing RLVR. \emph{Signal-purification} methods cancel privileged bias with an additional teacher pass~\citep{pan2026rlcsd,shen2026purified}, paying extra compute after the bias is already introduced. \emph{Context-curation} methods expose less of the solution or replace it with a successful sibling rollout~\citep{han2026adaptive,hubotter2026reinforcement}, but still rely on solution-specific supervision and auxiliary control. These approaches mitigate symptoms without removing their source: teacher guidance tied to the current problem's solution. \projectname{} removes this mismatch at the source, restores the concentrating correction absent from forward KL, and retains a single frozen-teacher pass with no reward.

\paragraph{Entropy-aware post-training.}
Entropy has become a common control signal in post-training, but serves different roles across RLVR and distillation.
In RLVR, it is primarily used to regulate exploration, from preventing entropy collapse to concentrating updates on high-entropy or decision-critical tokens~\citep{cui2025entropy,yu2026dapo,wang2026beyond,Hao2025RethinkingEI}.
In distillation, entropy has been used to guide supervision selection and adaptation: EOPD activates additional forward-KL supervision at high-teacher-entropy tokens, while subsequent work further adapts supervision across uncertainty regimes~\citep{jin2026entropy,zhang2026tailoring,lee2026sead}.
\projectname{} instead targets the relative entropy dynamics of the student and teacher, using their signed token-level entropy gap to correct entropy overshoot during on-policy self-distillation.

\paragraph{Training-time contextual guidance.}
A broad line of work improves post-training by providing auxiliary context that is available during training but absent at inference.
Context distillation internalizes prompts, exemplars, or scratchpads into model weights~\citep{askell2021general,snell2022learning,shin2025generative,kalwar2026masked}.
More recent methods diversify the form of this guidance: OPCD distills experiential knowledge from historical solution traces, SDFT learns on-policy from demonstrations, SKALD conditions a self-teacher on abstract skill cards, and PMD accumulates reusable procedural memories across training episodes~\citep{ye2026policy,shenfeld2026self,jiang2026distill,liu2026procedural}.
Other approaches inject more instance-specific scaffolds directly into optimization, such as partial solutions in QuestA, off-policy reasoning traces in LUFFY, in-context exemplars in XRPO, or retrieved analogous demonstrations in RA-RFT~\citep{li2026questaexpandingreasoningcapacity,yan2025learningreasonoffpolicyguidance,Bamba2025XRPOPT,xiao2026learningreasonanalogyretrievalaugmented}.
OPSD conditions its self-teacher on the current problem's reference solution; \projectname{} instead uses a retrieved solved neighbor, transferring reusable strategy across instances without revealing the current answer.

\section{Method}
\label{sec:method}

\projectname{} repairs both sides of the entropy-overshoot failure identified in Section~\ref{sec:introduction}: \emph{exemplar-guided teaching} changes what the teacher is conditioned on so its guidance is grounded in transferable reasoning rather than answer-specific cues, while \emph{entropy-aware distillation} changes how the student matches it, using the per-token entropy gap to set both the direction and strength of the correction. Both run inside the standard OPSD loop with one frozen teacher and no additional forward passes or extra network.

\paragraph{Setup.}
The student $\pi_\theta$ and frozen teacher $\pi_\phi$, initialized from the same model
($\phi=\theta_0$), score the same student-generated prefix
$\hat y_{<t}$ but under different contexts:
\begin{equation}
p_t(v)=\pi_\theta(v\mid x,\hat y_{<t}),
\qquad
q_t(v;r)=\pi_\phi(v\mid x,r(x),\hat y_{<t}),
\label{eq:rein-policies}
\end{equation}
where $r(x)$ denotes teacher-only privileged context. Standard OPSD sets
$r(x)=y^*$, giving the teacher the reference solution to the current problem.

\subsection{Exemplar-Guided Teaching}
\label{sec:rein-egt}

Conditioning on $y^*$ lets the teacher narrate toward a known answer, producing
guidance that depends on information absent from the student's context.
Exemplar-Guided Teaching (EGT) instead replaces the current solution with a
solved neighboring problem, so the teacher must still solve $x$ itself while
using the exemplar only as transferable strategy
(Figure~\ref{fig:egt-illustration}(a)).

We embed all training problems and compute pairwise cosine similarity offline, keeping each problem's $K$ nearest neighbors (self-excluded) as $\mathcal{N}_K(x)$. For the $g$-th of $G$ rollouts per problem, we draw one neighbor uniformly,
\begin{equation}
    r_g(x) = (x'_g,\, y'^*_g),
    \qquad
    x'_g \sim \mathrm{Uniform}\!\left(\mathcal{N}_K(x)\right),
    \quad x'_g \neq x,
    \label{eq:rein-retrieval}
\end{equation}
seeded deterministically by problem and rollout index, so different rollouts of the same problem see different exemplars at no extra retrieval cost.
Thus the teacher is conditioned on $(x,x'_g,y_g^{\prime *},\hat y_{<t})$,
while the student still sees only $(x,\hat y_{<t})$. EGT therefore removes the
current answer from teacher conditioning and replaces answer-specific privilege
with cross-problem strategy guidance. Prompts and offline preprocessing cost are in
Appendix~\ref{app:method-implementation}.

\subsection{Entropy-Aware Distillation}
\label{sec:rein-ead}

EGT improves \emph{what} the teacher provides; Entropy-Aware Distillation (EAD)
controls \emph{how} the student follows it. Building on the complementary
mode-covering and mode-seeking biases of forward and reverse KL
(Section~\ref{sec:kl}), EAD has two components:
\emph{Bidirectional Entropy Routing} (BER) selects the KL direction from the
student--teacher entropy gap $\Delta_t = H(p_t) - H(q_t)$, while \emph{Adaptive Gap Damping} (AGD) uses its
magnitude to scale the update.

\begin{proposition}[Entropy-gap stability]
If $p_t$ and $q_t$ concentrate their mass on a shared candidate set,
the sign of $\Delta_t$ agrees with that of the gap restricted to this
set whenever $|\Delta_t|$ exceeds a tail-dependent margin
(Appendix~\ref{app:entropy-gap-theory}).
\end{proposition}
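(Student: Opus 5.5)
We make the statement precise by fixing a shared candidate set $S\subseteq\mathcal V$, usually the teacher's top-$K_{\mathrm v}$ support from Section~\ref{sec:support-set}. Let the tail masses be $\epsilon_p=p_t(\mathcal V\setminus S)$ and $\epsilon_q=q_t(\mathcal V\setminus S)$. Write $\bar p=p_t|_S/(1-\epsilon_p)$ and $\bar q=q_t|_S/(1-\epsilon_q)$ for the renormalized restrictions, and define the restricted gap $\Delta_t^S=H(\bar p)-H(\bar q)$. The claim then reads: there is a margin $\delta(\epsilon_p,\epsilon_q,|\mathcal V|)$, vanishing as the tails vanish, such that $|\Delta_t|>\delta$ implies $\operatorname{sign}\Delta_t=\operatorname{sign}\Delta_t^S$. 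The plan is to show $|\Delta_t-\Delta_t^S|\le\delta$. Sign agreement then follows at once: if $\Delta_t>\delta$ then $\Delta_t^S\ge\Delta_t-\delta>0$, and the negative case is symmetric.

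The main tool is the grouping (chain) rule for entropy. Decompose $p_t$ as a two-way choice between $S$ and $\mathcal V\setminus S$, followed by a choice within each block. This gives
\begin{equation*}
H(p_t)=h(\epsilon_p)+(1-\epsilon_p)H(\bar p)+\epsilon_p H(\tilde p),
\end{equation*}
where $h$ is binary entropy and $\tilde p$ is the renormalized tail. The same identity holds for $q_t$. Hence $H(p_t)-H(\bar p)=h(\epsilon_p)-\epsilon_p H(\bar p)+\epsilon_p H(\tilde p)$. We bound this in absolute value by $h(\epsilon_p)+\epsilon_p\max\{\log|S|,\log(|\mathcal V|-|S|)\}$, using $0\le H(\bar p)\le\log|S|$ and $0\le H(\tilde p)\le\log(|\mathcal V|-|S|)$. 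Subtracting the analogous bound for $q_t$ gives
\begin{equation*}
|\Delta_t-\Delta_t^S|\le \delta:=h(\epsilon_p)+h(\epsilon_q)+(\epsilon_p+\epsilon_q)\log|\mathcal V|.
\end{equation*}
This is the tail-dependent margin. It tends to $0$ as both models concentrate on $S$, since $h(\epsilon)\to0$.

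The main obstacle is the tail term $\epsilon\log|\mathcal V|$. For realistic vocabularies ($\sim 10^5$ tokens) this term makes the margin loose unless the tails are tiny. If a sharper version is wanted, I would first try to exploit the tail-bucket approximation used in practice. There, the tail is a single atom, so $H(\tilde p)=0$ and the $\log|\mathcal V|$ term drops out, leaving $\delta=h(\epsilon_p)+h(\epsilon_q)+(\epsilon_p+\epsilon_q)\log|S|$. Otherwise one can assume the tail entropies are bounded by some constant $C$, which replaces $\log|\mathcal V|$ by $C$. I would state the proposition in the tail-bucket form, since that matches how $\Delta_t$ is actually computed in BER, and record the full-vocabulary bound as a remark.
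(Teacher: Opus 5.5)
Your argument is correct and is essentially the paper's proof: the same grouping-rule decomposition $H(r)=h(\alpha_r)+(1-\alpha_r)H(r^S)+\alpha_r H(r^{S^c})$, the same bound of both conditional entropies by $\log|\mathcal V|$, and the same triangle inequality. The only difference is that the paper replaces your separate tails $\epsilon_p,\epsilon_q$ by a common $\eta\le 1/2$, using monotonicity of $h$, to get $B_\eta=2h(\eta)+2\eta\log|\mathcal V|$; under the paper's hypotheses this is never smaller than your $\delta$. One caution: the paper computes $\Delta_t$ over the full vocabulary, and only $D_t^{\mathrm{BER}}$ uses the top-$K_{\mathrm v}$-plus-tail support, so your full-vocabulary bound should be the main result rather than the tail-bucket variant, which concerns a different quantity.
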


\vspace{-0.8em}
\paragraph{Bidirectional entropy routing.}
We use the entropy gap
$\Delta_t$ as a directional proxy for relative policy spread.
Higher entropy corresponds to more effective next-token alternatives.
BER uses this directional signal as follows.
When $\Delta_t\le0$, the student is relatively more concentrated than the
teacher, so BER applies forward KL to broaden coverage. When $\Delta_t>0$,
the student is relatively more diffuse, so BER switches to reverse KL to
refocus the policy (Figure~\ref{fig:egt-illustration}(b)). Using
$\widetilde{\mathcal V}_t$, the teacher's top-$K_{\mathrm v}$ tokens plus a
tail bucket, and the per-element clip $\tau$ inherited from OPSD
(Section~\ref{sec:kl-clipping}),
\begin{equation}
D_t^{\mathrm{BER}} =
\begin{cases}
\sum_{v\in\widetilde{\mathcal V}_t}
\min\!\left(q_t(v)\log\frac{q_t(v)}{p_t(v)},\,\tau\right),
& \Delta_t \le 0,\\[6pt]
\sum_{v\in\widetilde{\mathcal V}_t}
\min\!\left(p_t(v)\log\frac{p_t(v)}{q_t(v)},\,\tau\right),
& \Delta_t > 0.
\end{cases}
\label{eq:rein-routing}
\end{equation}
BER thus replaces OPSD's fixed one-way pressure with a state-dependent
correction that can either broaden or refocus the student policy.

\paragraph{Adaptive gap damping.}
The sign of $\Delta_t$ selects the correction direction, while its magnitude
indicates the need for correction: near-zero gaps make routing sensitive to
small perturbations, whereas larger gaps indicate more pronounced student--teacher
entropy mismatches that call for stronger correction.
We therefore use the gap magnitude as a confidence weight to
downweight uncertain routing decisions and concentrate learning on tokens
with larger entropy mismatches.
Over the valid response tokens $\mathcal{T}$ in the current mini-batch, we
define the typical gap scale
$\sigma^2=\frac{1}{|\mathcal{T}|}\sum_{(i,t)\in\mathcal{T}}\Delta_{i,t}^2$
and weight each routed update by
\begin{equation}
c_t
=
\frac{\Delta_t^2}{\Delta_t^2+\max(\sigma^2,\epsilon^2)}
\in[0,1),
\label{eq:rein-damping}
\end{equation}
where $\epsilon$ is the FP32 machine epsilon, with $c_t=0$ when
$\Delta_t=0$. Gaps small relative to the batch scale
are therefore suppressed, while those much larger than the scale retain
most of their update.
Because $\sigma^2$ is recomputed every mini-batch, the damping adapts to the
training dynamics without a tunable threshold.

\paragraph{Training objective.}
Combining exemplar-guided teaching, bidirectional entropy routing, and adaptive gap
damping, \projectname{} minimizes
\begin{equation}
\mathcal{L}_{\mathcal T}(\theta)
=
\frac{1}{|\mathcal T|}
\sum_{(i,t)\in\mathcal T}
\operatorname{sg}(c_{i,t})\,D_{i,t}^{\mathrm{BER}},
\label{eq:rein-objective}
\end{equation}
where $\mathcal T$ contains the valid response tokens in the current loss
computation, indexed by response $i$ and position $t$. Sampled responses and
exemplars are held fixed during optimization. The teacher conditions on the
sampled solved neighbor $(x',y'^{*})$ through $q_t$, while the student context
remains unchanged. Both the routing
decision and $c_t$ are treated as stop-gradient quantities, so optimization
acts only through the student probabilities in $D_t^{\mathrm{BER}}$.
Compared with OPSD, \projectname{} changes only the teacher context and
per-token divergence, requiring the same rollouts and frozen-teacher scoring
with no additional forward pass, verifier, or reward during distillation. See Appendix~\ref{app:method-implementation} for more overhead analysis. At inference, all privileged context is removed.

\section{Experiments}
\label{sec:experiments}

\subsection{Experimental Settings}
\label{sec:exp-setup}

\paragraph{Models \& Datasets.}
We evaluate four configurations across the Qwen3 \citep{Yang2025Qwen3TR} and
Olmo3 \citep{olmo2026olmo3} families: Qwen3-0.6B and Qwen3-1.7B with thinking enabled,
Qwen3-4B-Instruct-2507, and Olmo-3-7B-Instruct.
The reasoning mode is consistent across the student, teacher (where applicable), and evaluation.
Following \citet{pan2026rlcsd}, we use DeepMath-103K \citep{He2025DeepMath103KAL} for training, with a difficulty band assigned to each
configuration (Appendix~\ref{app:data-config}). We evaluate in-domain mathematical reasoning on
AMC23 \citep{amc23}, AIME'24 \citep{aime24}, AIME'25 \citep{aime25}, and
HMMT'25 Feb \citep{hmmt2025feb}. Out-of-domain evaluation covers logical reasoning
on a fixed, randomly sampled 100-puzzle subset of ZebraLogic~\citep{Lin2025ZebraLogicOT}for computational tractability, and code generation on HumanEval
\citep{Chen2021EvaluatingLL}.

\paragraph{Baselines.}
We compare against (1) \textbf{Base}, the untrained checkpoints;
(2) \textbf{OPSD} \citep{zhao2026self}, which distills a frozen teacher conditioned on the current
problem's reference solution using forward KL; (3) \textbf{SDPO} \citep{hubotter2026reinforcement},
which uses an EMA teacher conditioned on a successful sibling rollout and a generalized JSD loss;
(4) \textbf{GRPO} \citep{shao2024deepseekmath}, which optimizes a clipped policy objective using
group-normalized advantages from verifiable rewards, with KL regularization against a frozen reference policy;
(5) \textbf{RLSD} \citep{yang2026self}, which uses verifiable rewards to determine update directions and
self-distillation signals to modulate token-level update magnitudes;
and (6) \textbf{OPSD + EOPD}, our self-distillation adaptation of EOPD \citep{jin2026entropy}, which combines reverse KL with forward KL at high-teacher-entropy positions. We retain this entropy-gated divergence but replace EOPD's external stronger teacher with OPSD's reference-conditioned self-teacher for a controlled comparison.

\paragraph{Implementation Details.}
We use veRL \citep{Sheng2024HybridFlowAF} with vLLM \citep{Kwon2023EfficientMM} on
8$\times$H200 GPUs and a learning rate of $10^{-6}$.
We faithfully implement each baseline following the algorithmic design
in its original paper, preserving its method-specific objectives and update rules,
including teacher construction for self-distillation methods.
All trained methods and ablations share the same training settings and a budget
of 360 optimizer steps. Detailed baseline configurations, including the explicit
adaptation of EOPD to the OPSD setting, are provided in
Appendix~\ref{app:method-implementation} for reproducibility.
The baseline results in Table~\ref{tab:main} are obtained from these
implementations under the shared protocol.
For self-distillation methods, the teacher is frozen at its initial weights except in SDPO and RLSD.
We use $K = 15$ for a diverse exemplar selection, and supply retrieved exemplars only to the teacher. 
We evaluate each model with three evaluation seeds to reduce noise. We provide the model with only the original questions during evaluation inference. 
See Appendix~\ref{app:implementation} for detailed training and evaluation settings.

\begin{table}[t]
\caption{Main results across model scales and types on math reasoning, reported as mean@16 over three seeds. Green subscripts show our gains over Base, computed from the displayed values. Best values are bold and second-best distinct values underlined. Full evaluation results are in Appendix~\ref{app:full-results}.}
\label{tab:main}
% Convention for Tables 1 and 2: round scores to one decimal first, then subtract
% displayed Ours minus displayed Base for green gains. Compute averages from raw scores.
\centering
\normalsize
\setlength{\tabcolsep}{2.5pt}
\renewcommand{\arraystretch}{0.96}
\setlength{\aboverulesep}{0.3ex}
\setlength{\belowrulesep}{0.35ex}
\begin{tabular}{p{\dimexpr0.21\textwidth-2\tabcolsep\relax}p{\dimexpr0.20\textwidth-2\tabcolsep\relax}*{5}{>{\centering\arraybackslash}p{\dimexpr0.118\textwidth-2\tabcolsep\relax}}}
\toprule
 & & \multicolumn{5}{c}{\textbf{Math Reasoning} (mean@16)} \\
\cmidrule(lr){3-7}
\textbf{Model} & \textbf{Method} & AMC23 & AIME24 & AIME25 & HMMT25 & Avg. \\
\midrule
% Qwen3-0.6B Thinking: level1_3; OPSD, SDPO, and ours use step64; n=16, seeds 17/42/1234.
% Ours math: official math_repeat_20260921; hf_step64; n=16; seeds 17/42/1234.
\multirow{7}{=}{\centering Qwen3-0.6B\\(Thinking)} & Base & $45.8$ & $9.8$ & $14.2$ & $8.3$ & $19.5$ \\
 & OPSD & $45.9$ & $\underline{10.9}$ & $14.2$ & $\mathbf{9.2}$ & $20.0$ \\
 & SDPO & $45.5$ & $9.4$ & $14.6$ & $7.8$ & $19.3$ \\
 & GRPO & $46.1$ & $9.8$ & $\mathbf{16.4}$ & $8.4$ & $\underline{20.2}$ \\
 & RLSD & $\mathbf{46.8}$ & $\mathbf{11.0}$ & $15.1$ & $7.8$ & $\underline{20.2}$ \\
 & OPSD + EOPD & $46.4$ & $9.8$ & $15.9$ & $8.5$ & $20.1$ \\
\rowcolor[gray]{0.85} & \textbf{\mbox{\projectname{} (Ours)}} & $\underline{46.6}_{\textcolor{green!50!black}{+0.8}}$ & $\mathbf{11.0}_{\textcolor{green!50!black}{+1.2}}$ & $\underline{16.2}_{\textcolor{green!50!black}{+2.0}}$ & $\underline{8.9}_{\textcolor{green!50!black}{+0.6}}$ & $\mathbf{20.7}_{\textcolor{green!50!black}{+1.2}}$ \\
\midrule
\multirow{7}{=}{\centering Qwen3-1.7B\\(Thinking)} & Base & $75.5$ & $46.0$ & $34.6$ & $21.9$ & $44.5$ \\
 & OPSD & $74.9$ & $46.0$ & $35.0$ & $\underline{22.7}$ & $44.6$ \\
 & SDPO & $\underline{76.1}$ & $48.0$ & $\underline{35.3}$ & $21.9$ & $\underline{45.3}$ \\
 & GRPO & $74.6$ & $47.0$ & $35.2$ & $21.9$ & $44.7$ \\
 & RLSD & $75.9$ & $\underline{48.3}$ & $34.0$ & $21.8$ & $45.0$ \\
 & OPSD + EOPD & $74.3$ & $47.4$ & $34.2$ & $21.5$ & $44.3$ \\
\rowcolor[gray]{0.85} & \textbf{\mbox{\projectname{} (Ours)}} & $\mathbf{79.0}_{\textcolor{green!50!black}{+3.5}}$ & $\mathbf{52.3}_{\textcolor{green!50!black}{+6.3}}$ & $\mathbf{39.2}_{\textcolor{green!50!black}{+4.6}}$ & $\mathbf{25.0}_{\textcolor{green!50!black}{+3.1}}$ & $\mathbf{48.9}_{\textcolor{green!50!black}{+4.4}}$ \\
\midrule
\multirow{7}{=}{\centering Qwen3-4B-\\Instruct-2507} & Base & $85.4$ & $61.7$ & $46.5$ & $29.0$ & $\underline{55.7}$ \\
 & OPSD & $84.8$ & $\underline{62.4}$ & $45.6$ & $\underline{29.5}$ & $55.6$ \\
 & SDPO & $83.9$ & $61.3$ & $45.8$ & $27.5$ & $54.6$ \\
 & GRPO & $84.4$ & $61.7$ & $45.6$ & $29.1$ & $55.2$ \\
 & RLSD & $85.1$ & $62.2$ & $\underline{46.9}$ & $28.4$ & $\underline{55.7}$ \\
 & OPSD + EOPD & $\underline{85.5}$ & $62.0$ & $46.5$ & $28.4$ & $55.6$ \\
\rowcolor[gray]{0.85} & \textbf{\mbox{\projectname{} (Ours)}} & $\mathbf{87.1}_{\textcolor{green!50!black}{+1.7}}$ & $\mathbf{62.8}_{\textcolor{green!50!black}{+1.1}}$ & $\mathbf{51.0}_{\textcolor{green!50!black}{+4.5}}$ & $\mathbf{30.2}_{\textcolor{green!50!black}{+1.2}}$ & $\mathbf{57.8}_{\textcolor{green!50!black}{+2.1}}$ \\
\midrule
\multirow{7}{=}{\centering Olmo-3-7B-\\Instruct} & Base & $80.5$ & $52.8$ & $37.8$ & $23.3$ & $48.6$ \\
 & OPSD & $80.5$ & $52.8$ & $38.3$ & $22.4$ & $48.5$ \\
 & SDPO & $80.3$ & $51.3$ & $\underline{39.9}$ & $\mathbf{24.5}$ & $49.0$ \\
 & GRPO & $\underline{80.9}$ & $52.6$ & $38.6$ & $23.5$ & $48.9$ \\
 & RLSD & $80.4$ & $\underline{53.6}$ & $38.8$ & $23.8$ & $\underline{49.2}$ \\
 & OPSD + EOPD & $79.6$ & $53.5$ & $39.2$ & $\underline{24.0}$ & $49.1$ \\
\rowcolor[gray]{0.85} & \textbf{\mbox{\projectname{} (Ours)}} & $\mathbf{81.2}_{\textcolor{green!50!black}{+0.7}}$ & $\mathbf{56.3}_{\textcolor{green!50!black}{+3.5}}$ & $\mathbf{40.0}_{\textcolor{green!50!black}{+2.2}}$ & $23.9_{\textcolor{green!50!black}{+0.6}}$ & $\mathbf{50.3}_{\textcolor{green!50!black}{+1.7}}$ \\
\bottomrule
\end{tabular}
\end{table}

% Table 2 sits directly under Table 1; both are top floats on the same page.
\begin{table}[!t]
\caption{\textbf{Out-of-domain evaluation on logical reasoning and code generation.}
\projectname{} consistently improves both mean@16 and pass@8 across ZebraLogic and
HumanEval, showing that its gains extend beyond in-domain mathematical reasoning.}
\label{tab:ood}
\centering
\begin{minipage}{0.86\textwidth}
\centering
{\normalsize
\setlength{\tabcolsep}{1.5pt}
\renewcommand{\arraystretch}{0.98}
\setlength{\aboverulesep}{0.15ex}
\setlength{\belowrulesep}{0.2ex}

\begin{tabular}{
p{\dimexpr0.20\linewidth-2\tabcolsep\relax}
p{\dimexpr0.14\linewidth-2\tabcolsep\relax}
*{4}{>{\centering\arraybackslash}p{\dimexpr0.165\linewidth-2\tabcolsep\relax}}
}
\toprule
& & \multicolumn{2}{c}{\textbf{Logic: ZebraLogic}} & \multicolumn{2}{c}{\textbf{Coding: HumanEval}} \\
\cmidrule(lr){3-4}\cmidrule(lr){5-6}
\textbf{Model} & \textbf{Method}
& \makebox[2em][c]{mean@16}\makebox[1.8em][l]{} & \makebox[2em][c]{pass@8}\makebox[1.8em][l]{}
& \makebox[2em][c]{mean@16}\makebox[1.8em][l]{} & \makebox[2em][c]{pass@8}\makebox[1.8em][l]{} \\
\midrule

& Base
& \makebox[2em][r]{$28.8$}\makebox[1.8em][l]{}
& \makebox[2em][r]{$56.5$}\makebox[1.8em][l]{}
& \makebox[2em][r]{$52.6$}\makebox[1.8em][l]{}
& \makebox[2em][r]{$78.4$}\makebox[1.8em][l]{} \\
\rowcolor[gray]{0.85}
\multirow{-2}{=}{\centering Qwen3-0.6B\\(Thinking)}
& \mbox{\projectname{}}
& \makebox[2em][r]{$\mathbf{29.0}$}\makebox[1.8em][l]{$_{\textcolor{green!50!black}{+0.2}}$}
& \makebox[2em][r]{$56.5$}\makebox[1.8em][l]{$_{\textcolor{green!50!black}{+0.0}}$}
& \makebox[2em][r]{$\mathbf{53.1}$}\makebox[1.8em][l]{$_{\textcolor{green!50!black}{+0.5}}$}
& \makebox[2em][r]{$\mathbf{79.0}$}\makebox[1.8em][l]{$_{\textcolor{green!50!black}{+0.6}}$} \\
\midrule

& Base
& \makebox[2em][r]{$62.5$}\makebox[1.8em][l]{}
& \makebox[2em][r]{$78.3$}\makebox[1.8em][l]{}
& \makebox[2em][r]{$83.8$}\makebox[1.8em][l]{}
& \makebox[2em][r]{$93.2$}\makebox[1.8em][l]{} \\
\rowcolor[gray]{0.85}
\multirow{-2}{=}{\centering Qwen3-1.7B\\(Thinking)}
& \mbox{\projectname{}}
& \makebox[2em][r]{$\mathbf{67.4}$}\makebox[1.8em][l]{$_{\textcolor{green!50!black}{+4.9}}$}
& \makebox[2em][r]{$\mathbf{83.8}$}\makebox[1.8em][l]{$_{\textcolor{green!50!black}{+5.5}}$}
& \makebox[2em][r]{$\mathbf{84.5}$}\makebox[1.8em][l]{$_{\textcolor{green!50!black}{+0.7}}$}
& \makebox[2em][r]{$\mathbf{93.8}$}\makebox[1.8em][l]{$_{\textcolor{green!50!black}{+0.6}}$} \\
\midrule

& Base
& \makebox[2em][r]{$79.1$}\makebox[1.8em][l]{}
& \makebox[2em][r]{$92.4$}\makebox[1.8em][l]{}
& \makebox[2em][r]{$84.0$}\makebox[1.8em][l]{}
& \makebox[2em][r]{$89.9$}\makebox[1.8em][l]{} \\
\rowcolor[gray]{0.85}
\multirow{-2}{=}{\centering Qwen3-4B\\Instruct-2507}
& \mbox{\projectname{}}
& \makebox[2em][r]{$\mathbf{82.2}$}\makebox[1.8em][l]{$_{\textcolor{green!50!black}{+3.1}}$}
& \makebox[2em][r]{$\mathbf{92.8}$}\makebox[1.8em][l]{$_{\textcolor{green!50!black}{+0.4}}$}
& \makebox[2em][r]{$\mathbf{84.8}$}\makebox[1.8em][l]{$_{\textcolor{green!50!black}{+0.8}}$}
& \makebox[2em][r]{$\mathbf{90.7}$}\makebox[1.8em][l]{$_{\textcolor{green!50!black}{+0.8}}$} \\
\midrule

& Base
& \makebox[2em][r]{$32.7$}\makebox[1.8em][l]{}
& \makebox[2em][r]{$56.6$}\makebox[1.8em][l]{}
& \makebox[2em][r]{$78.2$}\makebox[1.8em][l]{}
& \makebox[2em][r]{$92.1$}\makebox[1.8em][l]{} \\
\rowcolor[gray]{0.85}
\multirow{-2}{=}{\centering OLMo-3-7B\\Instruct}
& \mbox{\projectname{}}
& \makebox[2em][r]{$\mathbf{33.9}$}\makebox[1.8em][l]{$_{\textcolor{green!50!black}{+1.2}}$}
& \makebox[2em][r]{$\mathbf{57.9}$}\makebox[1.8em][l]{$_{\textcolor{green!50!black}{+1.3}}$}
& \makebox[2em][r]{$\mathbf{78.7}$}\makebox[1.8em][l]{$_{\textcolor{green!50!black}{+0.5}}$}
& \makebox[2em][r]{$\mathbf{92.5}$}\makebox[1.8em][l]{$_{\textcolor{green!50!black}{+0.4}}$} \\
\bottomrule
\end{tabular}
}
\end{minipage}
\end{table}

\subsection{Main Results}
\label{sec:exp-main}

\paragraph{\projectname{} delivers robust and transferable gains.}
As shown in Tables~\ref{tab:main} and~\ref{tab:ood}, \projectname{} achieves the highest average math mean@16 across all four model configurations, spanning two model families and both thinking and instruct modes. In particular, \projectname{} improves over Base by 4.4 and 2.1 percentage points on Qwen3-1.7B Thinking and Qwen3-4B-Instruct-2507, respectively, while leading on all four math benchmarks for both models.
 Its consistent effectiveness across distinct reasoning modes suggests that \projectname{} addresses a fundamental failure mode of OPSD, rather than relying on a model- or reasoning-format-specific recipe. More importantly, the gains extend beyond math reasoning: \projectname{} improves over Base in both logical reasoning and code generation across all four configurations, with logical reasoning gains of 4.9 and 3.1 points on Qwen3-1.7B Thinking and Qwen3-4B-Instruct-2507, respectively, alongside consistent gains in code generation as well. Together, these gains support our central premise that effective self-distillation should transfer reasoning strategies and regulate distillation dynamics, rather than merely imitate solutions.

\begin{figure*}[t]
    \centering
    \begin{minipage}[t]{0.30\textwidth}
        \centering
        % Figure 5 is built by plotting_scripts/fig05_training_dynamics.py
        % from figure_data/; see plotting_scripts/README.md.
        \IfFileExists{figures/entropy_opsd_vs_ours_olmo_3_7b_instruct.pdf}{%
            \includegraphics[width=\linewidth]{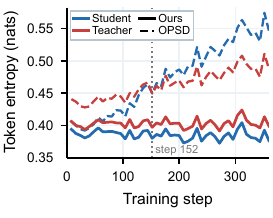}%
        }{\fbox{\rule{0pt}{1.45in}\rule{0.96\linewidth}{0pt}}}
        \\(a) Entropy dynamics on Olmo-3-7B-Instruct.
    \end{minipage}\hfill
    \begin{minipage}[t]{0.30\textwidth}
        \centering
        \IfFileExists{figures/entropy_opsd_vs_ours_qwen3_4b_instruct.pdf}{%
            \includegraphics[width=\linewidth]{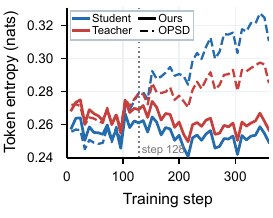}%
        }{\fbox{\rule{0pt}{1.45in}\rule{0.96\linewidth}{0pt}}}
        \\(b) Entropy dynamics on Qwen3-4B-Instruct-2507.
    \end{minipage}\hfill
    \begin{minipage}[t]{0.30\textwidth}
        \centering
        \IfFileExists{figures/kl_direction_qwen3_1_7b.pdf}{%
            \includegraphics[width=\linewidth]{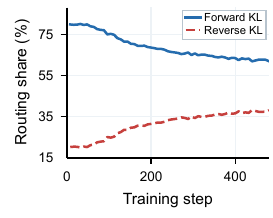}%
        }{\fbox{\rule{0pt}{1.45in}\rule{0.96\linewidth}{0pt}}}
        \\(c) Emergent KL-routing dynamics.
    \end{minipage}
\caption{\textbf{Training dynamics.}
(a,b) Student and teacher token entropy under OPSD (dashed) and \projectname{} (solid);
the dotted line marks the first student--teacher entropy crossing under OPSD.
(c) On Qwen3-1.7B Thinking, routing share denotes the fraction of response tokens
assigned to forward or reverse KL by BER. The routing naturally shifts from
forward-dominated exploration toward increased reverse-KL concentration as training proceeds.}
\label{fig:dynamics}
\end{figure*}

\paragraph{\projectname{} mitigates persistent entropy overshoot.}
Figures~\ref{fig:dynamics}(a) and~\ref{fig:dynamics}(b) revisit the failure identified in the introduction. Under OPSD, student entropy crosses the teacher early and continues to drift upward, creating a persistent entropy gap. In contrast, \projectname{} keeps student entropy tightly regulated around the teacher throughout training, preventing runaway diffusion while preserving sufficient exploration. This curbs entropy overshoot and stabilizes distillation.
\flushbottom

\paragraph{\projectname{} exhibits an emergent exploration-to-concentration transition.}
As shown in Figure~\ref{fig:dynamics}(c), forward KL dominates early, promoting exploration toward teacher-supported alternatives, while reverse KL becomes increasingly prevalent later, suppressing excess probability mass and concentrating the student distribution. This shift emerges without an explicit schedule and naturally tracks the evolving student state. As both routing curves flatten, \projectname{} further exhibits a self-equilibrating diffusion--concentration dynamic.

% One figure, three panels, drawn on a single canvas by plotting_scripts/fig05_ablation_row.py
% at exactly \textwidth, so the three plot boxes share one baseline and one top edge and the
% sublabels are part of the graphic.
\begin{figure*}[t]
    \centering

\begin{minipage}[t]{0.32\textwidth}
    \centering
    \includegraphics[width=\linewidth]{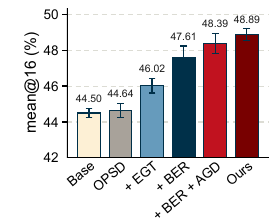}
    \\(a) Component contributions.
\end{minipage}
\hfill
\begin{minipage}[t]{0.32\textwidth}
    \centering
    \includegraphics[width=\linewidth]{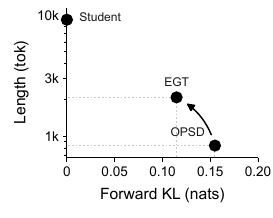}
    \\(b) Teacher-context probe.
\end{minipage}
\hfill
\begin{minipage}[t]{0.32\textwidth}
    \centering
    \includegraphics[width=\linewidth]{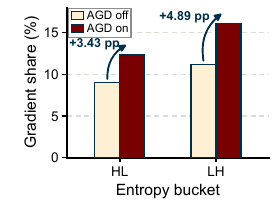}
    \\(c) AGD gradient probe.
\end{minipage}

\caption{\textbf{Component ablation and mechanism probes.}
(a) Performance contribution of each design component; ``+'' denotes the component added on top of OPSD.
(b) EGT improves teacher--student compatibility. The x-axis denotes teacher-to-student forward KL evaluated on student prefixes; the y-axis shows response length from each setting's own rollouts (log scale). (c) AGD sharpens entropy-aware routing. HL denotes high-student/low-teacher
entropy and LH the reverse.}
    \label{fig:ablation}
\end{figure*}

\subsection{Ablation Study}
\label{sec:exp-ablation}

\paragraph{Performance breakdown by design components.}
Figure~\ref{fig:ablation}(a) isolates the contribution of each design choice on Qwen3-1.7B Thinking. Starting from OPSD at 44.6 macro mean@16, exemplar-guided teaching raises performance to 46.0, while bidirectional entropy routing reaches 47.6. Adding adaptive gap damping further improves the score to 48.4, and the full combination achieves 48.9. The steady gains suggest that the components provide complementary benefits, improving both the supervision the student receives and how that supervision is applied. For detailed results, please refer to Table~\ref{tab:app-ablation-mean16}.

\paragraph{State-dependent KL routing outperforms fixed directions.}
With the reference-answer teacher fixed and adaptive gap damping disabled, our entropy-aware routing clearly outperforms all other methods that apply fixed KL directions (see Figure~\ref{fig:ber-direction-dynamics}). To be more precise, forward KL keeps expanding the student and eventually drives entropy overshoot, making the student overly diffuse. Reverse KL suppresses excess mass but also restricts exploration when the student has not yet covered enough plausible alternatives. Their fixed mixture softens both extremes, yet applies expansion and contraction simultaneously regardless of the student's current state, causing the two corrections to partially counteract each other rather than selecting the one that is needed. In contrast, \projectname{} uses the student--teacher entropy gap to select the needed correction, expanding when the student is too concentrated and contracting when it is too diffuse, yielding more stable entropy dynamics and stronger validation performance.

% KL-direction dynamics, full width; drawn by plotting_scripts/fig07_ber_direction_dynamics.py.
\begin{figure*}[!t]
\centering
\includegraphics[width=0.92\textwidth]{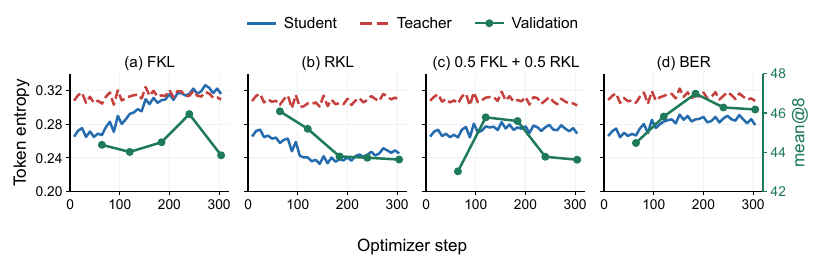}
\caption{\textbf{State-dependent KL routing avoids fixed directional bias.}
KL-direction dynamics on Qwen3-1.7B Thinking with the teacher context fixed
and AGD disabled. The x-axis shows optimizer steps; the left y-axis reports
student and teacher token entropy, while the right y-axis of panel (d) reports validation
mean@8. Panels compare (a) forward KL, (b) reverse KL, (c) an equal-weight
forward/reverse KL mixture, and (d) BER. The trajectories reveal distinct
distributional dynamics under fixed versus state-dependent KL routing.}
\label{fig:ber-direction-dynamics}
\end{figure*}

\paragraph{Exemplar guidance makes the teacher locally learnable.}
Figure~\ref{fig:ablation}(b) reveals why replacing the current reference solution matters.
When conditioned on the reference answer, the teacher produces much shorter and more confident trajectories than the student, with an entropy of 0.190 nats versus the student's 0.268 nats.
EGT instead keeps the teacher closer to the student's reasoning regime, at 2,090 tokens and 0.261 nats.
Moreover, EGT reduces teacher--student forward KL by 25.9\% (0.1549 to 0.1148 nats) on the same student prefixes.
Together, these observations suggest that EGT brings the teacher's native uncertainty closer to the student's reasoning regime while providing more closely aligned token-level guidance on student-visited states.

\paragraph{AGD sharpens entropy-aware routing.}
Figure~\ref{fig:ablation}(c) isolates how AGD redistributes token-level learning
signals. Here, high entropy means $H\ge\ln 2$, corresponding to at least two
effective next-token choices. Entropy-discordant positions---HL and LH, where
student and teacher fall into opposite high/low entropy regimes---account for
only 7.67\% of all response tokens, yet contribute 20.18\% of the relative
gradient mass without AGD.
Enabling AGD further raises this share to 28.50\%, with both HL and LH increasing
consistently. This shows that AGD does not uniformly rescale the distillation
objective; instead, it makes BER more selective by shifting learning toward
positions with clearer student--teacher disagreement in policy spread. To rule out that small entropy gaps merely hide distributional mismatch, we measure probability-mass overlap: strongly damped positions have 0.985 mean overlap (Appendix~\ref{app:mechanism-probes}), supporting the entropy gap as a simple yet principled proxy for local distributional mismatch.
In this sense, BER determines \emph{which direction} to correct,
while AGD determines \emph{which routing decisions deserve stronger updates}.

\section{Conclusion}
\label{sec:conclusion}
We identify entropy overshoot as a key failure mode of on-policy self-distillation: a teacher that has read the answer supplies guidance the student cannot fully learn, and OPSD's forward-KL objective keeps pushing the student outward, turning useful exploration into persistent distributional mismatch.
We introduce \projectname{}, which controls this process by constructing more locally learnable teacher targets and adapting both the direction and the strength of distillation to the student's training state.
Across both thinking and instruct models, \projectname{} consistently improves model's reasoning capability, with gains transferring to out-of-domain tasks.
Our analyses show that effective self-distillation requires matching teacher guidance to student-reachable states and uncertainty structure, rather than simply increasing or suppressing entropy. These results suggest that controlling where exploration occurs is as important as encouraging exploration itself.

\bibliography{references}

@article{guo2025deepseek,
  title={Deepseek-r1: Incentivizing reasoning capability in llms via reinforcement learning},
  author={Guo, Daya and Yang, Dejian and Zhang, Haowei and Song, Junxiao and Wang, Peiyi and Zhu, Qihao and Xu, Runxin and Zhang, Ruoyu and Ma, Shirong and Bi, Xiao and others},
  journal={arXiv preprint arXiv:2501.12948},
  year={2025}
}

@article{lu2025onpolicydistillation,
  author = {Kevin Lu and Thinking Machines Lab},
  title = {On-Policy Distillation},
  journal = {Thinking Machines Lab: Connectionism},
  year = {2025},
  note = {https://thinkingmachines.ai/blog/on-policy-distillation},
  doi = {10.64434/tml.20251026},
}

@misc{yu2026dapo,
      title={DAPO: An Open-Source LLM Reinforcement Learning System at Scale}, 
      author={Qiying Yu and Zheng Zhang and Ruofei Zhu and Yufeng Yuan and Xiaochen Zuo and Yu Yue and Weinan Dai and Tiantian Fan and Gaohong Liu and Lingjun Liu and Xin Liu and Haibin Lin and Zhiqi Lin and Bole Ma and Guangming Sheng and Yuxuan Tong and Chi Zhang and Mofan Zhang and Wang Zhang and Hang Zhu and Jinhua Zhu and Jiaze Chen and Jiangjie Chen and Chengyi Wang and Hongli Yu and Yuxuan Song and Xiangpeng Wei and Hao Zhou and Jingjing Liu and Wei-Ying Ma and Ya-Qin Zhang and Lin Yan and Mu Qiao and Yonghui Wu and Mingxuan Wang},
      year={2025},
      eprint={2503.14476},
      archivePrefix={arXiv},
      primaryClass={cs.LG},
      url={https://arxiv.org/abs/2503.14476}, 
}

@article{gu2023minillm,
  title={Minillm: On-policy distillation of large language models},
  author={Gu, Yuxian and Dong, Li and Wei, Furu and Huang, Minlie},
  journal={arXiv preprint arXiv:2306.08543},
  year={2023}
}

@article{ko2024distillm,
  title={Distillm: Towards streamlined distillation for large language models},
  author={Ko, Jongwoo and Kim, Sungnyun and Chen, Tianyi and Yun, Se-Young},
  journal={arXiv preprint arXiv:2402.03898},
  year={2024}
}

@inproceedings{wu2025rethinking,
  title={Rethinking kullback-leibler divergence in knowledge distillation for large language models},
  author={Wu, Taiqiang and Tao, Chaofan and Wang, Jiahao and Yang, Runming and Zhao, Zhe and Wong, Ngai},
  booktitle={Proceedings of the 31st International Conference on Computational Linguistics},
  pages={5737--5755},
  year={2025}
}

@article{jin2026entropy,
  title={Entropy-aware on-policy distillation of language models},
  author={Jin, Woogyeol and Min, Taywon and Yang, Yongjin and Wei, Dennis and Zhou, Yi and Kadhe, Swanand Ravindra and Baracaldo, Nathalie and Lee, Kimin},
  journal={arXiv preprint arXiv:2603.07079},
  year={2026}
}

@article{ye2026policy,
  title={On-policy context distillation for language models},
  author={Ye, Tianzhu and Dong, Li and Wu, Xun and Huang, Shaohan and Wei, Furu},
  journal={arXiv preprint arXiv:2602.12275},
  year={2026}
}

@article{shenfeld2026self,
  title={Self-distillation enables continual learning},
  author={Shenfeld, Idan and Damani, Mehul and H{\"u}botter, Jonas and Agrawal, Pulkit},
  journal={arXiv preprint arXiv:2601.19897},
  year={2026}
}

@article{hubotter2026reinforcement,
  title={Reinforcement learning via self-distillation},
  author={H{\"u}botter, Jonas and L{\"u}beck, Frederike and Behric, Lejs and Baumann, Anton and Bagatella, Marco and Marta, Daniel and Hakimi, Ido and Shenfeld, Idan and Buening, Thomas Kleine and Guestrin, Carlos and others},
  journal={arXiv preprint arXiv:2601.20802},
  year={2026}
}

@article{yang2026self,
  title={Self-distilled rlvr},
  author={Yang, Chenxu and Qin, Chuanyu and Si, Qingyi and Chen, Minghui and Gu, Naibin and Yao, Dingyu and Lin, Zheng and Wang, Weiping and Wang, Jiaqi and Duan, Nan},
  journal={arXiv preprint arXiv:2604.03128},
  year={2026}
}

@article{pan2026rlcsd,
  title={RLCSD: Reinforcement Learning with Contrastive On-Policy Self-Distillation},
  author={Pan, Leyi and Tao, Shuchang and Zhai, Yunpeng and Zhang, Lingzhe and Liu, Zhaoyang and Ding, Bolin and Liu, Aiwei and Wen, Lijie},
  journal={arXiv preprint arXiv:2606.11709},
  year={2026}
}

@article{ke2026respecting,
  title={Respecting self-uncertainty in on-policy self-distillation for efficient llm reasoning},
  author={Ke, Junlong and Wen, Zichen and Li, Weijia and He, Conghui and Zhang, Linfeng},
  journal={arXiv preprint arXiv:2605.13255},
  year={2026}
}

@article{han2026adaptive,
  title={Adaptive Teacher Exposure for Self-Distillation in LLM Reasoning},
  author={Han, Zihao and Zhang, Tiangang and Wang, Huaibin and Sun, Yilun},
  journal={arXiv preprint arXiv:2605.11458},
  year={2026}
}

@article{kaur2026rethinking,
  title={Rethinking On-Policy Self-Distillation for Thinking Models},
  author={Kaur, Simran and Ri, Narutatsu and He, Yinghui and Fowl, Liam and Arora, Sanjeev},
  journal={arXiv preprint arXiv:2607.05184},
  year={2026}
}

@article{he2026self,
  title={Self-distillation zero: Self-revision turns binary rewards into dense supervision},
  author={He, Yinghui and Kaur, Simran and Bhaskar, Adithya and Yang, Yongjin and Liu, Jiarui and Ri, Narutatsu and Fowl, Liam and Panigrahi, Abhishek and Chen, Danqi and Arora, Sanjeev},
  journal={arXiv preprint arXiv:2604.12002},
  year={2026}
}

@article{cui2025entropy,
  title={The entropy mechanism of reinforcement learning for reasoning language models},
  author={Cui, Ganqu and Zhang, Yuchen and Chen, Jiacheng and Yuan, Lifan and Wang, Zhi and Zuo, Yuxin and Li, Haozhan and Fan, Yuchen and Chen, Huayu and Chen, Weize and others},
  journal={arXiv preprint arXiv:2505.22617},
  year={2025}
}

@misc{wang2026beyond,
      title={Beyond the 80/20 Rule: High-Entropy Minority Tokens Drive Effective Reinforcement Learning for LLM Reasoning}, 
      author={Shenzhi Wang and Le Yu and Chang Gao and Chujie Zheng and Shixuan Liu and Rui Lu and Kai Dang and Xionghui Chen and Jianxin Yang and Zhenru Zhang and Yuqiong Liu and An Yang and Andrew Zhao and Yang Yue and Shiji Song and Bowen Yu and Gao Huang and Junyang Lin},
      year={2025},
      eprint={2506.01939},
      archivePrefix={arXiv},
      primaryClass={cs.CL},
      url={https://arxiv.org/abs/2506.01939}, 
}

@article{shen2026purified,
  title={Purified OPSD: On-Policy Self-Distillation Without Losing How to Think},
  author={Shen, Zhanming and Tong, Jintao and Yan, Shaotian and Shen, Chen and Chen, Hao and Ye, Wentao and Hu, Xiaomeng and Miao, Rui and Wang, Haobo and Zhao, Junbo and others},
  journal={arXiv preprint arXiv:2607.02234},
  year={2026}
}

@article{zhang2026tailoring,
  title={Tailoring Teaching to Aptitude: Direction-Adaptive Self-Distillation for LLM Reasoning},
  author={Zhang, Hongbin and Wang, Chaozheng and Chen, Kehai and Pan, Youcheng and Xiang, Yang and Wang, Jinpeng and Zhang, Min},
  journal={arXiv preprint arXiv:2605.22263},
  year={2026}
}

@article{kim2026does,
  title={Why Does Self-Distillation (Sometimes) Degrade the Reasoning Capability of LLMs?},
  author={Kim, Jeonghye and Luo, Xufang and Kim, Minbeom and Lee, Sangmook and Kim, Dohyung and Jeon, Jiwon and Li, Dongsheng and Yang, Yuqing},
  journal={arXiv preprint arXiv:2603.24472},
  year={2026}
}

@article{harne2026privileged,
  title={Privileged, but Biased: How PI-Conditioned Teachers Break Self-Distillation},
  author={Harne, Sarthak and Karkar, Chinmay and Pandya, Yash and Awadallah, Ahmed and Nambi, Akshay},
  journal={arXiv preprint arXiv:2608.04794},
  year={2026}
}

@article{lee2026sead,
  title={SEAD: Competence-Aware On-Policy Distillation via Entropy-Guided Supervision},
  author={Lee, Chia-Hsuan and Cheng, Zelei and Wang, Yu and Ni, Renkun and Sahu, Sambit and Zhang, Shi-Xiong and Campbell, William},
  journal={arXiv preprint arXiv:2606.28562},
  year={2026}
}

@article{jiang2026distill,
  title={Distill Skills into Weights, Not Prompts: Abstract Skills as Privileged Signals for On-Policy Self-Distillation},
  author={Jiang, Yubo and Xie, Fengying and Jiang, Zhiguo and Zhang, Haopeng},
  journal={arXiv preprint arXiv:2608.09826},
  year={2026}
}

@article{liu2026procedural,
  title={Procedural Memory Distillation: Online Reflection for Self-Improving Language Models},
  author={Liu, Ye and Bansal, Srijan and Pang, Bo and Li, Yang and Liu, Zeyu Leo and Ming, Yifei and Ke, Zixuan and Joty, Shafiq and Yavuz, Semih},
  journal={arXiv preprint arXiv:2607.01480},
  year={2026}
}

@inproceedings{agarwal2024policy,
  title={On-policy distillation of language models: Learning from self-generated mistakes},
  author={Agarwal, Rishabh and Vieillard, Nino and Zhou, Yongchao and Stanczyk, Piotr and Ramos Garea, Sabela and Geist, Matthieu and Bachem, Olivier},
  booktitle={International Conference on Learning Representations},
  volume={2024},
  pages={21246--21263},
  year={2024}
}

@article{shao2024deepseekmath,
  title={Deepseekmath: Pushing the limits of mathematical reasoning in open language models},
  author={Shao, Zhihong and Wang, Peiyi and Zhu, Qihao and Xu, Runxin and Song, Junxiao and Bi, Xiao and Zhang, Haowei and Zhang, Mingchuan and Li, YK and Wu, Yang and others},
  journal={arXiv preprint arXiv:2402.03300},
  year={2024}
}

@article{zhao2026self,
  title={Self-distilled reasoner: On-policy self-distillation for large language models},
  author={Zhao, Siyan and Xie, Zhihui and Liu, Mengchen and Huang, Jing and Pang, Guan and Chen, Feiyu and Grover, Aditya},
  journal={arXiv preprint arXiv:2601.18734},
  year={2026}
}

@article{askell2021general,
  title={A general language assistant as a laboratory for alignment},
  author={Askell, Amanda and Bai, Yuntao and Chen, Anna and Drain, Dawn and Ganguli, Deep and Henighan, Tom and Jones, Andy and Joseph, Nicholas and Mann, Ben and DasSarma, Nova and others},
  journal={arXiv preprint arXiv:2112.00861},
  year={2021}
}

@article{snell2022learning,
  title={Learning by distilling context},
  author={Snell, Charlie and Klein, Dan and Zhong, Ruiqi},
  journal={arXiv preprint arXiv:2209.15189},
  year={2022}
}

@inproceedings{shin2025generative,
  title={Generative prompt internalization},
  author={Shin, Haebin and Ji, Lei and Gong, Yeyun and Kim, Sungdong and Choi, Eunbi and Seo, Minjoon},
  booktitle={Proceedings of the 2025 Conference of the Nations of the Americas Chapter of the Association for Computational Linguistics: Human Language Technologies (Volume 1: Long Papers)},
  pages={7338--7363},
  year={2025}
}

@article{kalwar2026masked,
  title={Masked Distillation: Internalizing the Chain-of-Thought in Language Models},
  author={Kalwar, Durgesh and Palod, Vardhan and Kambhampati, Subbarao},
  journal={arXiv preprint arXiv:2607.22629},
  year={2026}
}

@misc{li2026questaexpandingreasoningcapacity,
      title={QuestA: Expanding Reasoning Capacity in LLMs via Question Augmentation}, 
      author={Jiazheng Li and Hongzhou Lin and Hong Lu and Kaiyue Wen and Zaiwen Yang and Jiaxuan Gao and Yi Wu and Jingzhao Zhang},
      year={2026},
      eprint={2507.13266},
      archivePrefix={arXiv},
      primaryClass={cs.CL},
      url={https://arxiv.org/abs/2507.13266}, 
}

@misc{yan2025learningreasonoffpolicyguidance,
      title={Learning to Reason under Off-Policy Guidance}, 
      author={Jianhao Yan and Yafu Li and Zican Hu and Zhi Wang and Ganqu Cui and Xiaoye Qu and Yu Cheng and Yue Zhang},
      year={2025},
      eprint={2504.14945},
      archivePrefix={arXiv},
      primaryClass={cs.LG},
      url={https://arxiv.org/abs/2504.14945}, 
}

@misc{xiao2026learningreasonanalogyretrievalaugmented,
      title={Learning to Reason by Analogy via Retrieval-Augmented Reinforcement Fine-Tuning}, 
      author={Zilin Xiao and Qi Ma and Chun-cheng Jason Chen and Xintao Chen and Avinash Atreya and Hanjie Chen and Vicente Ordonez},
      year={2026},
      eprint={2606.13680},
      archivePrefix={arXiv},
      primaryClass={cs.CL},
      url={https://arxiv.org/abs/2606.13680}, 
}

@article{Bamba2025XRPOPT,
  title={XRPO: Pushing the limits of GRPO with Targeted Exploration and Exploitation},
  author={Udbhav Bamba and Ming-Hao Fang and Yi-Fan Yu and Haizhong Zheng and Fan Lai},
  journal={ArXiv},
  year={2025},
  volume={abs/2510.06672},
  url={https://api.semanticscholar.org/CorpusID:281891555}
}

@inproceedings{Minka2005DivergenceMA,
  title={Divergence measures and message passing},
  author={Thomas P. Minka},
  year={2005},
  url={https://api.semanticscholar.org/CorpusID:7585417}
}

@misc{Shing2025TAIDTA,
      title={TAID: Temporally Adaptive Interpolated Distillation for Efficient Knowledge Transfer in Language Models}, 
      author={Makoto Shing and Kou Misaki and Han Bao and Sho Yokoi and Takuya Akiba},
      year={2025},
      eprint={2501.16937},
      archivePrefix={arXiv},
      primaryClass={cs.LG},
      url={https://arxiv.org/abs/2501.16937}, 
}

@misc{Yang2025Qwen3TR,
      title={Qwen3 Technical Report}, 
      author={An Yang and Anfeng Li and Baosong Yang and Beichen Zhang and Binyuan Hui and Bo Zheng and Bowen Yu and Chang Gao and Chengen Huang and Chenxu Lv and Chujie Zheng and Dayiheng Liu and Fan Zhou and Fei Huang and Feng Hu and Hao Ge and Haoran Wei and Huan Lin and Jialong Tang and Jian Yang and Jianhong Tu and Jianwei Zhang and Jianxin Yang and Jiaxi Yang and Jing Zhou and Jingren Zhou and Junyang Lin and Kai Dang and Keqin Bao and Kexin Yang and Le Yu and Lianghao Deng and Mei Li and Mingfeng Xue and Mingze Li and Pei Zhang and Peng Wang and Qin Zhu and Rui Men and Ruize Gao and Shixuan Liu and Shuang Luo and Tianhao Li and Tianyi Tang and Wenbiao Yin and Xingzhang Ren and Xinyu Wang and Xinyu Zhang and Xuancheng Ren and Yang Fan and Yang Su and Yichang Zhang and Yinger Zhang and Yu Wan and Yuqiong Liu and Zekun Wang and Zeyu Cui and Zhenru Zhang and Zhipeng Zhou and Zihan Qiu},
      year={2025},
      eprint={2505.09388},
      archivePrefix={arXiv},
      primaryClass={cs.CL},
      url={https://arxiv.org/abs/2505.09388}, 
}

@misc{Zeng2026GLM5FV,
      title={GLM-5: from Vibe Coding to Agentic Engineering}, 
      author={GLM-5-Team and : and Aohan Zeng and Xin Lv and Zhenyu Hou and Zhengxiao Du and Qinkai Zheng and Bin Chen and Da Yin and Chendi Ge and Chenghua Huang and Chengxing Xie and Chenzheng Zhu and Congfeng Yin and Cunxiang Wang and Gengzheng Pan and Hao Zeng and Haoke Zhang and Haoran Wang and Huilong Chen and Jiajie Zhang and Jian Jiao and Jiaqi Guo and Jingsen Wang and Jingzhao Du and Jinzhu Wu and Kedong Wang and Lei Li and Lin Fan and Lucen Zhong and Mingdao Liu and Mingming Zhao and Pengfan Du and Qian Dong and Rui Lu and Shuang-Li and Shulin Cao and Song Liu and Ting Jiang and Xiaodong Chen and Xiaohan Zhang and Xuancheng Huang and Xuezhen Dong and Yabo Xu and Yao Wei and Yifan An and Yilin Niu and Yitong Zhu and Yuanhao Wen and Yukuo Cen and Yushi Bai and Zhongpei Qiao and Zihan Wang and Zikang Wang and Zilin Zhu and Ziqiang Liu and Zixuan Li and Bojie Wang and Bosi Wen and Can Huang and Changpeng Cai and Chao Yu and Chen Li and Chengwei Hu and Chenhui Zhang and Dan Zhang and Daoyan Lin and Dayong Yang and Di Wang and Ding Ai and Erle Zhu and Fangzhou Yi and Feiyu Chen and Guohong Wen and Hailong Sun and Haisha Zhao and Haiyi Hu and Hanchen Zhang and Hanrui Liu and Hanyu Zhang and Hao Peng and Hao Tai and Haobo Zhang and He Liu and Hongwei Wang and Hongxi Yan and Hongyu Ge and Huan Liu and Huanpeng Chu and Jia'ni Zhao and Jiachen Wang and Jiajing Zhao and Jiamin Ren and Jiapeng Wang and Jiaxin Zhang and Jiayi Gui and Jiayue Zhao and Jijie Li and Jing An and Jing Li and Jingwei Yuan and Jinhua Du and Jinxin Liu and Junkai Zhi and Junwen Duan and Kaiyue Zhou and Kangjian Wei and Ke Wang and Keyun Luo and Laiqiang Zhang and Leigang Sha and Liang Xu and Lindong Wu and Lintao Ding and Lu Chen and Minghao Li and Nianyi Lin and Pan Ta and Qiang Zou and Rongjun Song and Ruiqi Yang and Shangqing Tu and Shangtong Yang and Shaoxiang Wu and Shengyan Zhang and Shijie Li and Shuang Li and Shuyi Fan and Wei Qin and Wei Tian and Weining Zhang and Wenbo Yu and Wenjie Liang and Xiang Kuang and Xiangmeng Cheng and Xiangyang Li and Xiaoquan Yan and Xiaowei Hu and Xiaoying Ling and Xing Fan and Xingye Xia and Xinyuan Zhang and Xinze Zhang and Xirui Pan and Xu Zou and Xunkai Zhang and Yadi Liu and Yandong Wu and Yanfu Li and Yidong Wang and Yifan Zhu and Yijun Tan and Yilin Zhou and Yiming Pan and Ying Zhang and Yinpei Su and Yipeng Geng and Yong Yan and Yonglin Tan and Yuean Bi and Yuhan Shen and Yuhao Yang and Yujiang Li and Yunan Liu and Yunqing Wang and Yuntao Li and Yurong Wu and Yutao Zhang and Yuxi Duan and Yuxuan Zhang and Zezhen Liu and Zhengtao Jiang and Zhenhe Yan and Zheyu Zhang and Zhixiang Wei and Zhuo Chen and Zhuoer Feng and Zijun Yao and Ziwei Chai and Ziyuan Wang and Zuzhou Zhang and Bin Xu and Minlie Huang and Hongning Wang and Juanzi Li and Yuxiao Dong and Jie Tang},
      year={2026},
      eprint={2602.15763},
      archivePrefix={arXiv},
      primaryClass={cs.LG},
      url={https://arxiv.org/abs/2602.15763}, 
}

@misc{DeepSeekAI2026DeepSeekV4TH,
      title={DeepSeek-V4: Towards Highly Efficient Million-Token Context Intelligence}, 
      author={DeepSeek-AI and Anyi Xu and Bangcai Lin and Bing Xue and Bingxuan Wang and Bingzheng Xu and Bochao Wu and Bowei Zhang and Chaofan Lin and Chen Dong and Chenchen Ling and Chengda Lu and Chenggang Zhao and Chengqi Deng and Chengyu Hou and Chenhao Xu and Chenze Shao and Chong Ruan and Conner Sun and Damai Dai and Daya Guo and Dejian Yang and Deli Chen and Donghao Li and Dongjie Ji and Erhang Li and Fang Wei and Fangyun Lin and Fangzhou Yuan and Feiyu Xia and Fucong Dai and Guangbo Hao and Guanting Chen and Guoai Cao and Guolai Meng and Guowei Li and Han Yu and Han Zhang and Hanwei Xu and Hao Li and Haofen Liang and Haoling Zhang and Haoming Luo and Haoran Wei and Haotian Yuan and Haowei Zhang and Haowen Luo and Haoyu Chen and Haozhe Ji and Hengqing Zhang and Honghui Ding and Hongxuan Tang and Huanqi Cao and Huazuo Gao and Hui Qu and Hui Zeng and J Yang and JQ Zhu and Jia Luo and Jia Song and Jia Yu and Jialiang Huang and Jialu Cai and Jian Liang and Jiangting Zhou and Jiasheng Ye and Jiashi Li and Jiaxin Xu and Jiewen Hu and Jieyu Yang and Jin Chen and Jin Yan and Jingchang Chen and Jingli Zhou and Jingting Xiang and Jingyang Yuan and Jingyuan Cheng and Jingzi Zhou and Jinhua Zhu and Jiping Yu and Joseph Sun and Jun Ran and Junguang Jiang and Junjie Qiu and Junlong Li and Junmin Zheng and Junxiao Song and Kai Dong and Kaige Gao and Kang Guan and Kexing Zhou and Kezhao Huang and Kuai Yu and Lean Wang and Lecong Zhang and Lei Wang and Leyi Xia and Li Zhang and Liang Zhao and Lihua Guo and Lingxiao Luo and Linwang Ma and Linyan Zhu and Litong Wang and Liyu Cai and Liyue Zhang and Longhao Chen and MS Di and MY Xu and Max Mei and Miaojun Wang and Mingchuan Zhang and Minghua Zhang and Minghui Tang and Mingming Li and Mingxu Zhou and Minmin Han and Ning Wang and Panpan Huang and Panpan Wang and Peixin Cong and Peiyi Wang and Peng Zhang and Qiancheng Wang and Qihao Zhu and Qingyang Li and Qinyu Chen and Qiushi Du and Qiwei Jiang and Rui Tian and Ruifan Xu and Ruijie Lu and Ruiling Xu and Ruiqi Ge and Ruisong Zhang and Ruizhe Pan and Runji Wang and Runqian Chen and Runqiu Yin and Runxin Xu and Ruomeng Shen and Ruoyu Zhang and Ruyi Chen and SH Liu and Shanghao Lu and Shangmian Sun and Shangyan Zhou and Shanhuang Chen and Shaofei Cai and Shaoheng Nie and Shaoqing Wu and Shaoyuan Chen and Shengding Hu and Shengyu Liu and Shiqiang Hu and Shirong Ma and Shiyu Wang and Shuiping Yu and Shunfeng Zhou and Shuting Pan and Shuying Yu and Songyang Zhou and Tao Ni and Tao Yun and Tian Jin and Tian Pei and Tian Ye and Tianle Lin and Tianran Ji and Tianyi Cui and Tianyuan Yue and Tingting Yu and Tun Wang and W Zhang and WL Xiao and Wangding Zeng and Wei An and Weilin Zhao and Wen Liu and Wenfeng Liang and Wenjie Pang and Wenjing Luo and Wenjing Yao and Wenjun Gao and Wenkai Yang and Wenlve Huang and Wenqing Hou and Wentao Zhang and Wenting Ma and Xi Gao and Xiang He and Xiangwen Wang and Xianzu Wang and Xiao Bi and Xiaodong Liu and Xiaohan Wang and Xiaokang Chen and Xiaokang Zhang and Xiaotao Nie and Xiaowen Sun and Xiaoxiang Wang and Xin Cheng and Xin Liu and Xin Xie and Xingchao Liu and Xingchen Liu and Xingkai Yu and Xingyou Li and Xinyu Yang and Xinyu Zhang and Xu Chen and Xuanyu Wang and Xuecheng Su and Xueyin Chen and Xuheng Lin and Xuwei Fu and YC Yan and YQ Wang and YW Ma and Yanfeng Luo and Yang Zhang and Yanhong Xu and Yanru Ma and Yanwen Huang and Yao Li and Yao Li and Yao Xu and Yao Zhao and Yaofeng Sun and Yaohui Wang and Yi Qian and Yi Shao and Yi Yu and Yichao Zhang and Yifan Ding and Yifan Shi and Yijia Wu and Yiliang Xiong and Yiling Ma and Ying He and Ying Tang and Ying Zhou and Yingjia Luo and Yinmin Zhong and Yishi Piao and Yisong Wang and Yixiang Zhang and Yixiao Chen and Yixuan Tan and Yixuan Wei and Yiyang Ma and Yiyuan Liu and Yonglun Yang and Yongqiang Guo and Yongtong Wu and Yu Wu and YuKun Li and Yuan Cheng and Yuan Ou and Yuanfan Xu and Yuanhao Li and Yuduan Wang and Yuehan Yang and Yuer Xu and Yuhan Wu and Yuhao Meng and Yuheng Zou and Yukun Zha and Yunfan Xiong and Yupeng Chen and Yuping Lin and Yuqian Cao and Yuqian Wang and Yushun Zhang and Yuting Yan and Yutong Lin and Yuxian Gu and Yuxiang Luo and Yuxiang You and Yuxuan Liu and Yuxuan Zhou and Yuyang Zhou and Yuzhen Huang and ZF Wu and Zehao Wang and Zehua Zhao and Zehui Ren and Zekai Zhang and Zhangli Sha and Zhe Fu and Zhe Ju and Zhean Xu and Zhenda Xie and Zhengyan Zhang and Zheren Gao and Zhewen Hao and Zhibin Gou and Zhicheng Ma and Zhigang Yan and Zhihong Shao and Zhixian Huang and Zhixuan Chen and Zhiyu Wu and Zhizhou Ren and Zhongyu Wu and Zhuoshu Li and Zhuping Zhang and Zian Xu and Zihao Wang and Zihua Qu and Zihui Gu and Zijia Zhu and Zilin Li and Zipeng Zhang and Ziwei Xie and Ziyi Gao and Ziyi Wan and Zizheng Pan and Zongqing Yao},
      year={2026},
      eprint={2606.19348},
      archivePrefix={arXiv},
      primaryClass={cs.CL},
      url={https://arxiv.org/abs/2606.19348}, 
}

@inproceedings{Sheng2024HybridFlowAF, series={EuroSys ’25},
   title={HybridFlow: A Flexible and Efficient RLHF Framework},
   url={http://dx.doi.org/10.1145/3689031.3696075},
   DOI={10.1145/3689031.3696075},
   booktitle={Proceedings of the Twentieth European Conference on Computer Systems},
   publisher={ACM},
   author={Sheng, Guangming and Zhang, Chi and Ye, Zilingfeng and Wu, Xibin and Zhang, Wang and Zhang, Ru and Peng, Yanghua and Lin, Haibin and Wu, Chuan},
   year={2025},
   month=Mar, pages={1279–1297},
   collection={EuroSys ’25} }

@misc{Kwon2023EfficientMM,
      title={Efficient Memory Management for Large Language Model Serving with PagedAttention}, 
      author={Woosuk Kwon and Zhuohan Li and Siyuan Zhuang and Ying Sheng and Lianmin Zheng and Cody Hao Yu and Joseph E. Gonzalez and Hao Zhang and Ion Stoica},
      year={2023},
      eprint={2309.06180},
      archivePrefix={arXiv},
      primaryClass={cs.LG},
      url={https://arxiv.org/abs/2309.06180}, 
}

@misc{olmo2026olmo3,
      title={Olmo 3}, 
      author={Team Olmo and : and Allyson Ettinger and Amanda Bertsch and Bailey Kuehl and David Graham and David Heineman and Dirk Groeneveld and Faeze Brahman and Finbarr Timbers and Hamish Ivison and Jacob Morrison and Jake Poznanski and Kyle Lo and Luca Soldaini and Matt Jordan and Mayee Chen and Michael Noukhovitch and Nathan Lambert and Pete Walsh and Pradeep Dasigi and Robert Berry and Saumya Malik and Saurabh Shah and Scott Geng and Shane Arora and Shashank Gupta and Taira Anderson and Teng Xiao and Tyler Murray and Tyler Romero and Victoria Graf and Akari Asai and Akshita Bhagia and Alexander Wettig and Alisa Liu and Aman Rangapur and Chloe Anastasiades and Costa Huang and Dustin Schwenk and Harsh Trivedi and Ian Magnusson and Jaron Lochner and Jiacheng Liu and Lester James V. Miranda and Maarten Sap and Malia Morgan and Michael Schmitz and Michal Guerquin and Michael Wilson and Regan Huff and Ronan Le Bras and Rui Xin and Rulin Shao and Sam Skjonsberg and Shannon Zejiang Shen and Shuyue Stella Li and Tucker Wilde and Valentina Pyatkin and Will Merrill and Yapei Chang and Yuling Gu and Zhiyuan Zeng and Ashish Sabharwal and Luke Zettlemoyer and Pang Wei Koh and Ali Farhadi and Noah A. Smith and Hannaneh Hajishirzi},
      year={2026},
      eprint={2512.13961},
      archivePrefix={arXiv},
      primaryClass={cs.CL},
      url={https://arxiv.org/abs/2512.13961}, 
}

@article{He2025DeepMath103KAL,
  title={DeepMath-103K: A Large-Scale, Challenging, Decontaminated, and Verifiable Mathematical Dataset for Advancing Reasoning},
  author={Zhiwei He and Tian Liang and Jiahao Xu and Qiuzhi Liu and Xingyu Chen and Yue Wang and Linfeng Song and Dian Yu and Zhenwen Liang and Wenxuan Wang and Zhuosheng Zhang and Rui Wang and Zhaopeng Tu and Haitao Mi and Dong Yu},
  journal={ArXiv},
  year={2025},
  volume={abs/2504.11456},
  url={https://api.semanticscholar.org/CorpusID:277787455}
}

@misc{amc23,
  author = {{math-ai}},
  title  = {{AMC} 2023},
  year   = {2023},
  url    = {https://huggingface.co/datasets/math-ai/amc23},
  note   = {Dataset compiled from the 2023 American Mathematics Competitions}
}

@misc{hmmt2025feb,
  author = {{Harvard--MIT Mathematics Tournament}},
  title  = {{HMMT} {February} 2025: Problems and Solutions},
  year   = {2025},
  url    = {https://www.hmmt.org/www/archive/282}
}

@misc{aime24,
      title={American Invitational Mathematics Examination (AIME) 2024}, 
      author={Zhang, Yifan and Math-AI, Team},
      year={2024},
}

@misc{aime25,
      title={American Invitational Mathematics Examination (AIME) 2025}, 
      author={Zhang, Yifan and Math-AI, Team},
      year={2025},
}

@article{Chen2021EvaluatingLL,
  title={Evaluating Large Language Models Trained on Code},
  author={Mark Chen and Jerry Tworek and Heewoo Jun and Qiming Yuan and Henrique Pond{\'e} de Oliveira Pinto and Jared Kaplan and Harrison Edwards and Yuri Burda and Nicholas Joseph and Greg Brockman and Alex Ray and Raul Puri and Gretchen Krueger and Michael Petrov and Heidy Khlaaf and Girish Sastry and Pamela Mishkin and Brooke Chan and Scott Gray and Nick Ryder and Mikhail Pavlov and Alethea Power and Lukasz Kaiser and Mo Bavarian and Clemens S. Winter and Phil Tillet and Felipe Petroski Such and David W. Cummings and Matthias Plappert and Fotios Chantzis and Elizabeth Barnes and Ariel Herbert-Voss and William Hebgen Guss and Alex Nichol and Alex Paino and Nikolas Tezak and Jie Tang and Igor Babuschkin and Suchir Balaji and Shantanu Jain and William Saunders and Christopher Hesse and Andrew Carr and Jan Leike and Joshua Achiam and Vedant Misra and Evan Morikawa and Alec Radford and Matthew M. Knight and Miles Brundage and Mira Murati and Katie Mayer and Peter Welinder and Bob McGrew and Dario Amodei and Sam McCandlish and Ilya Sutskever and Wojciech Zaremba},
  journal={ArXiv},
  year={2021},
  volume={abs/2107.03374},
  url={https://api.semanticscholar.org/CorpusID:235755472}
}

@article{Zhang2025Qwen3EA,
  title={Qwen3 Embedding: Advancing Text Embedding and Reranking Through Foundation Models},
  author={Yanzhao Zhang and Mingxin Li and Dingkun Long and Xin Zhang and Huan Lin and Baosong Yang and Pengjun Xie and An Yang and Dayiheng Liu and Junyang Lin and Fei Huang and Jingren Zhou},
  journal={ArXiv},
  year={2025},
  volume={abs/2506.05176},
  url={https://api.semanticscholar.org/CorpusID:279243736}
}

@article{Lin2025ZebraLogicOT,
  title={ZebraLogic: On the Scaling Limits of LLMs for Logical Reasoning},
  author={Bill Yuchen Lin and Ronan Le Bras and Kyle Richardson and Ashish Sabharwal and Radha Poovendran and Peter Clark and Yejin Choi},
  journal={ArXiv},
  year={2025},
  volume={abs/2502.01100},
  url={https://api.semanticscholar.org/CorpusID:276094621}
}

@article{Hao2025RethinkingEI,
  title={Rethinking Entropy Interventions in RLVR: An Entropy Change Perspective},
  author={Zhezheng Hao and Hong Wang and Haoyang Liu and Jian Luo and Jiarui Yu and Hande Dong and Qiang Lin and Can Wang and Jiawei Chen},
  journal={ArXiv},
  year={2025},
  volume={abs/2510.10150},
  url={https://api.semanticscholar.org/CorpusID:282057600}
}

@misc{li2026rethinkingonpolicydistillationlarge,
      title={Rethinking On-Policy Distillation of Large Language Models: Phenomenology, Mechanism, and Recipe}, 
      author={Yaxuan Li and Yuxin Zuo and Bingxiang He and Jinqian Zhang and Chaojun Xiao and Cheng Qian and Tianyu Yu and Huan-ang Gao and Wenkai Yang and Zhiyuan Liu and Ning Ding},
      year={2026},
      eprint={2604.13016},
      archivePrefix={arXiv},
      primaryClass={cs.LG},
      url={https://arxiv.org/abs/2604.13016}, 
}
\bibliographystyle{preprint}

\appendix

\clearpage
\section{Experimental Details}
\label{app:implementation}

This section expands the experimental protocol in Section~\ref{sec:exp-setup}.
Complete results and standard deviations are given in Appendix~\ref{app:full-results}.

\subsection{Data and Model Configurations}
\label{app:data-config}

Training prompts come from DeepMath-103K \citep{He2025DeepMath103KAL}, filtered to the difficulty
bands in Table~\ref{tab:app-data-config}. Within each configuration, the same reasoning mode
is used for the student, teacher (where applicable), and evaluation. The Instruct configurations use
Qwen3-4B-Instruct-2507 and Olmo-3-7B-Instruct.

\begin{table}[htbp]
\centering
\caption{Backbone configurations and DeepMath training difficulty bands.}
\label{tab:app-data-config}
\normalsize
\setlength{\tabcolsep}{7pt}
\begin{tabular}{lll}
\toprule
Backbone & Reasoning mode & Difficulty levels \\
\midrule
Qwen3-0.6B & Thinking & 1--3 \\
Qwen3-1.7B & Thinking & 5--7 \\
Qwen3-4B-Instruct-2507 & Instruct & 6--8 \\
Olmo-3-7B-Instruct & Instruct & 7--10 \\
\bottomrule
\end{tabular}
\end{table}

For in-domain mathematical reasoning, we evaluate on AMC23, AIME'24, AIME'25,
and HMMT'25 Feb.
For out-of-domain evaluation, we use ZebraLogic for logical reasoning and
HumanEval for code generation. We construct a fixed subset of 100 ZebraLogic
puzzles by sampling four puzzles per grid size, using the same subset for all
configurations.

\subsection{Shared Training Parameters and Budgets}
\label{app:shared-training}
All methods and ablations use veRL with vLLM rollouts on a single 8$\times$H200 node. They share the training settings in Table~\ref{tab:app-training-config} and a budget of 360 optimizer steps.

\begin{table}[htbp]
\centering
\caption{Shared training parameters.}
\label{tab:app-training-config}
\normalsize
\setlength{\tabcolsep}{9pt}
\begin{tabular}{ll}
\toprule
Parameter & Value \\
\midrule
Optimizer & AdamW \\
Learning rate & $1\times10^{-6}$ \\
Warmup & 100 steps \\
Weight decay & $0.01$ \\
Gradient-norm clipping & $0.1$ \\
\midrule
Prompts per rollout round & 256 \\
Responses per prompt & $G=8$ \\
Mini-batch size & 32 prompt groups \\
Optimizer updates per round & 8 \\
\midrule
Rollout temperature & $1.0$ \\
Rollout top-$p$ / top-$k$ & $0.95$ / 20 \\
Maximum rollout response length & 16,384 tokens \\
\bottomrule
\end{tabular}
\end{table}

\subsection{Evaluation Protocol}
\label{app:eval-protocol}

We evaluate models with the settings in Table~\ref{tab:app-eval-config}. We report mean@16 averaged over three evaluation seeds.
Evaluation uses the student alone
with the original question and the backbone's corresponding reasoning mode.

\begin{table}[htbp]
\centering
\caption{Evaluation parameters. Batch sizes count prompts.}
\label{tab:app-eval-config}
\normalsize
\setlength{\tabcolsep}{7pt}
\begin{tabular}{lll}
\toprule
Group & Parameter & Value \\
\midrule
\multirow{5}{*}{Evaluation}
 & Responses per problem & 16 \\
 & Evaluation seeds & 17, 42, 1234 \\
 & Temperature & $0.6$ \\
 & Top-$p$ / top-$k$ & $0.95$ / 20 \\
 & Maximum prompt length & 2,048 tokens \\
\midrule
\multirow{3}{*}{Maximum response length}
 & Math reasoning & 38,912 tokens \\
 & Logic (ZebraLogic) & 38,912 tokens \\
 & Coding (HumanEval) & 16,384 tokens \\
\midrule
\multirow{3}{*}{Evaluation batch size}
 & Math reasoning & 4 prompts \\
 & Logic (ZebraLogic) & 16 prompts \\
 & Coding (HumanEval) & 8 prompts \\
\bottomrule
\end{tabular}
\end{table}

\subsection{Method-Specific Implementation and Exemplars}
\label{app:method-implementation}

Table~\ref{tab:app-method-config} summarizes the settings that differ across methods.
OPSD + EOPD adapts EOPD~\citep{jin2026entropy} by using OPSD's reference-conditioned
self-teacher. Distillation uses gradient-free teacher passes; EMA and snapshot updates
follow the schedules below. Entropy is computed over the full vocabulary.

\begingroup
\normalsize
\setlength{\tabcolsep}{4pt}
\renewcommand{\arraystretch}{1.08}
\begin{longtable}{>{\raggedright\arraybackslash}p{0.16\textwidth}>{\raggedright\arraybackslash}p{0.22\textwidth}>{\raggedright\arraybackslash}p{0.21\textwidth}>{\raggedright\arraybackslash}p{\dimexpr0.41\textwidth-8\tabcolsep\relax}}
\caption{Method-specific objectives and training settings.}
\label{tab:app-method-config}\\
\toprule
Method & Setting & Value & Description \\
\midrule
\endfirsthead
\multicolumn{4}{l}{\normalsize Table~\thetable{} (continued)}\\
\toprule
Method & Setting & Value & Description \\
\midrule
\endhead
\multicolumn{4}{r}{\normalsize Continued on next page}\\
\endfoot
\bottomrule
\endlastfoot
OPSD & Teacher & Frozen initial weights & Current reference solution as context. \\*
 & Objective & Forward KL & Teacher-to-student distillation. \\*
 & Support / clipping & Top-100 + tail; $0.05$ & Teacher-selected support; per-element cap. \\
\midrule
SDPO & Teacher / EMA decay & EMA; $0.95$ & Updated after each optimizer step. \\*
 & Teacher context & Successful sibling & Excludes the rollout being scored. \\*
 & JSD mixing $\alpha$ & $0.5$ & Equal student--teacher mixture. \\*
 & Support & Top-100 + tail & Student-selected support. \\*
 & Importance-ratio cap & $2.0$ & Upper cap on the distillation ratio. \\
\midrule
GRPO & Reward / advantage & Binary; group-normalized & Verifiable correctness; normalized within each response group. \\*
 & Policy-ratio clip & $\epsilon=0.2$ & Clipped policy-gradient objective. \\*
 & Reference / KL weight & Frozen initial weights; $0.001$ & KL regularization in the actor loss. \\
\midrule
RLSD & Teacher / refresh & Snapshot; every 10 steps & Copies student weights every 10 optimizer steps. \\*
 & PPO / evidence clips & $\epsilon=0.2$; $\epsilon_w=0.2$ & Policy-ratio and evidence-weight clipping. \\*
 & Modulation $\lambda$ & $0.5\rightarrow0$ & Linear decay over 50 training steps. \\
\midrule
OPSD + EOPD & Teacher & Frozen initial weights & Current reference solution as context. \\*
 & Entropy gate & $H_T\geq0.8$ nats & Adds forward KL to sampled reverse KL. \\*
 & Forward-KL coefficient & $1.0$ & Weight of the entropy-gated term. \\*
 & Forward-KL support & Top-16; no tail & Renormalized teacher, full-vocabulary student; no per-element cap. \\
\midrule
\projectname{} & Teacher & Frozen initial weights & One retrieved solved neighbor as context. \\*
 & Objective & BER + AGD & Entropy gap sets direction and weight. \\*
 & Support / clipping & Top-100 + tail; $0.05$ & Teacher-selected support; per-element cap. \\*
 & Exemplar candidates & 15 neighbors & Self-excluded; one draw per rollout. \\
\end{longtable}
\endgroup

\paragraph{Exemplar construction and teacher prompt.}
Using Qwen3-Embedding-8B~\citep{Zhang2025Qwen3EA}, we build a self-excluded
15-neighbor index within each difficulty band. Each rollout draws one neighbor and
its dataset reference solution, with sampling seeded by the training seed, step,
problem, and rollout slot. The box reproduces the fixed user message with variable
fields in braces. After applying the backbone's native chat template and reasoning
mode, the teacher scores the student's continuation conditioned on
\texttt{\{student\_prefix\}}. Exemplars are used only during training.
For Qwen3-1.7B, one-time embedding and neighbor-index construction took 219 seconds
on eight H200 GPUs, accounting for 0.41\% of the GPU time spent on rollout
generation and actor updates over 360 optimizer steps.

\begin{tcolorbox}[
  breakable,
  colback=gray!3,colframe=gray!55,boxrule=0.5pt,arc=1mm,
  title={EGT teacher prompt},fonttitle=\bfseries,
  fontupper=\normalsize\ttfamily\raggedright,left=7pt,right=7pt,top=6pt,bottom=6pt,
  before skip=6pt,after skip=8pt]
Problem: \{current\_problem\}

\medskip
Here is a reference solution to a DIFFERENT but related problem. It is not the
problem you must solve, and its final answer is NOT the answer to your problem
-- use it only as a guide to the strategies and techniques that transfer:

\medskip
=== Reference Solution Begin ===\par
Related problem: \{related\_problem\}

\medskip
Solution to the related problem:\par
\{related\_solution\}\par
=== Reference Solution End ===

\medskip
After reading the reference solution above, make sure you understand the
reasoning behind each step, then solve the original problem with whatever
transfers from it.

\medskip
Please reason step by step, and put your final answer within \textbackslash boxed\{\}.
\tcblower
\normalfont\normalsize
\textbf{Assistant continuation:} \texttt{\{student\_prefix\}}\par
The teacher scores the next token after this prefix; this line is not part of the user message.
\end{tcolorbox}

\clearpage
\section{Full Evaluation Results}
\label{app:full-results}

\subsection{Evaluation Metrics}
\label{app:eval-metrics}
For $N$ problems, let $c_i(m)$ be the number of correct answers among $m$ independently
sampled responses to problem $i$. We compute
\[
\mathrm{mean}@k = \frac{1}{N}\sum_{i=1}^{N}\frac{c_i(k)}{k},
\qquad
\widehat{\mathrm{pass}@k} = \frac{1}{N}\sum_{i=1}^{N}
\left[1-\frac{\binom{n-c_i(n)}{k}}{\binom{n}{k}}\right].
\]
Here $k\le n=16$, with $\binom{a}{k}=0$ for $a<k$. Estimated pass@1 equals mean@16,
so its duplicate table is omitted. Entries are mean ${\pm}$ population standard deviation (\%)
over three seeds; bold/underlined means indicate best/second-best distinct results.

\subsection{In-domain Mathematical Reasoning}
\label{app:in-domain-results}
\begin{table}[H]
\caption{Mathematical reasoning mean@16 (\%). Avg. equally weights the four benchmarks.}
\label{tab:app-mean16}
\centering
\normalsize
\setlength{\tabcolsep}{0.25pt}
\renewcommand{\arraystretch}{0.96}
\setlength{\aboverulesep}{0.3ex}
\setlength{\belowrulesep}{0.35ex}
\begin{tabular}{p{\dimexpr0.14\textwidth-2\tabcolsep\relax}>{\raggedright\arraybackslash}p{\dimexpr0.20\textwidth-2\tabcolsep\relax}*{5}{>{\centering\arraybackslash}p{\dimexpr0.132\textwidth-2\tabcolsep\relax}}}
\toprule
 & & \multicolumn{5}{c}{\textbf{Math Reasoning} (mean@16)} \\
\cmidrule(lr){3-7}
\textbf{Model} & \textbf{Method} & AMC23 & AIME24 & AIME25 & HMMT25 & Avg. \\
\midrule
\multirow{7}{=}{\centering Qwen3-0.6B\\(Thinking)} & Base & $45.81{\pm}0.98$ & $9.79{\pm}0.29$ & $14.17{\pm}1.12$ & $8.26{\pm}0.43$ & $19.51{\pm}0.19$ \\
 & OPSD & $45.88{\pm}0.86$ & $10.90{\pm}0.86$ & $14.17{\pm}0.59$ & $\mathbf{9.17}{\pm}0.51$ & $20.03{\pm}0.49$ \\
 & SDPO & $45.48{\pm}0.27$ & $9.38{\pm}0.34$ & $14.58{\pm}1.39$ & $7.78{\pm}1.70$ & $19.30{\pm}0.62$ \\
 & GRPO & $46.11{\pm}0.98$ & $9.79{\pm}0.17$ & $\mathbf{16.39}{\pm}0.26$ & $8.40{\pm}0.77$ & $20.17{\pm}0.03$ \\
 & RLSD & $\mathbf{46.84}{\pm}1.95$ & $\underline{10.97}{\pm}0.49$ & $15.14{\pm}0.64$ & $7.78{\pm}0.26$ & $\underline{20.18}{\pm}0.53$ \\
 & \mbox{OPSD + EOPD} & $46.36{\pm}0.62$ & $9.79{\pm}0.95$ & $15.90{\pm}0.35$ & $8.54{\pm}1.03$ & $20.15{\pm}0.24$ \\
\rowcolor[gray]{0.85}  & \textbf{\mbox{\projectname{} (Ours)}} & $\underline{46.56}{\pm}1.37$ & $\mathbf{11.04}{\pm}0.29$ & $\underline{16.18}{\pm}0.35$ & $\underline{8.89}{\pm}1.39$ & $\mathbf{20.67}{\pm}0.42$ \\
\midrule
\multirow{7}{=}{\centering Qwen3-1.7B\\(Thinking)} & Base & $75.50{\pm}0.28$ & $46.04{\pm}1.78$ & $34.58{\pm}0.29$ & $21.88{\pm}0.78$ & $44.50{\pm}0.27$ \\
 & OPSD & $74.87{\pm}0.46$ & $45.97{\pm}1.58$ & $35.00{\pm}1.36$ & $\underline{22.71}{\pm}0.61$ & $44.64{\pm}0.41$ \\
 & SDPO & $\underline{76.05}{\pm}0.98$ & $47.99{\pm}2.04$ & $\underline{35.28}{\pm}1.72$ & $21.88{\pm}1.28$ & $\underline{45.30}{\pm}0.29$ \\
 & GRPO & $74.65{\pm}0.09$ & $47.01{\pm}1.39$ & $35.21{\pm}0.95$ & $21.88{\pm}0.90$ & $44.69{\pm}0.42$ \\
 & RLSD & $75.85{\pm}0.20$ & $\underline{48.26}{\pm}0.10$ & $33.96{\pm}1.39$ & $21.81{\pm}0.55$ & $44.97{\pm}0.52$ \\
 & \mbox{OPSD + EOPD} & $74.27{\pm}0.15$ & $47.43{\pm}1.28$ & $34.24{\pm}1.21$ & $21.46{\pm}0.78$ & $44.35{\pm}0.09$ \\
\rowcolor[gray]{0.85}  & \textbf{\mbox{\projectname{} (Ours)}} & $\mathbf{79.02}{\pm}0.41$ & $\mathbf{52.29}{\pm}0.45$ & $\mathbf{39.24}{\pm}0.26$ & $\mathbf{25.00}{\pm}0.90$ & $\mathbf{48.89}{\pm}0.32$ \\
\midrule
\multirow{7}{=}{\centering Qwen3-4B-\\Instruct-2507} & Base & $85.39{\pm}0.48$ & $61.74{\pm}0.49$ & $46.46{\pm}2.04$ & $29.03{\pm}0.71$ & $55.65{\pm}0.79$ \\
 & OPSD & $84.81{\pm}0.09$ & $\underline{62.36}{\pm}1.39$ & $45.63{\pm}1.47$ & $\underline{29.51}{\pm}0.84$ & $55.58{\pm}0.52$ \\
 & SDPO & $83.86{\pm}0.69$ & $61.32{\pm}1.32$ & $45.83{\pm}1.45$ & $27.50{\pm}1.06$ & $54.63{\pm}0.66$ \\
 & GRPO & $84.44{\pm}0.77$ & $61.67{\pm}0.45$ & $45.56{\pm}1.83$ & $29.10{\pm}1.70$ & $55.19{\pm}0.55$ \\
 & RLSD & $85.12{\pm}0.84$ & $62.22{\pm}0.94$ & $\underline{46.94}{\pm}0.84$ & $28.40{\pm}0.26$ & $\underline{55.67}{\pm}0.30$ \\
 & \mbox{OPSD + EOPD} & $\underline{85.47}{\pm}0.49$ & $62.01{\pm}2.21$ & $46.46{\pm}0.29$ & $28.40{\pm}0.26$ & $55.59{\pm}0.71$ \\
\rowcolor[gray]{0.85}  & \textbf{\mbox{\projectname{} (Ours)}} & $\mathbf{87.10}{\pm}0.63$ & $\mathbf{62.85}{\pm}1.09$ & $\mathbf{50.97}{\pm}1.13$ & $\mathbf{30.21}{\pm}0.61$ & $\mathbf{57.78}{\pm}0.34$ \\
\midrule
\multirow{7}{=}{\centering Olmo-3-7B-\\Instruct} & Base & $80.52{\pm}0.29$ & $52.85{\pm}1.21$ & $37.78{\pm}0.20$ & $23.26{\pm}0.35$ & $48.60{\pm}0.39$ \\
 & OPSD & $80.52{\pm}0.40$ & $52.78{\pm}1.32$ & $38.26{\pm}1.79$ & $22.36{\pm}0.49$ & $48.48{\pm}0.69$ \\
 & SDPO & $80.27{\pm}0.12$ & $51.32{\pm}0.79$ & $\underline{39.93}{\pm}1.74$ & $\mathbf{24.51}{\pm}1.43$ & $49.01{\pm}0.14$ \\
 & GRPO & $\underline{80.92}{\pm}1.08$ & $52.64{\pm}0.39$ & $38.61{\pm}0.80$ & $23.47{\pm}0.52$ & $48.91{\pm}0.18$ \\
 & RLSD & $80.37{\pm}0.51$ & $\underline{53.61}{\pm}1.09$ & $38.82{\pm}0.10$ & $23.82{\pm}0.26$ & $\underline{49.16}{\pm}0.31$ \\
 & \mbox{OPSD + EOPD} & $79.62{\pm}0.47$ & $53.47{\pm}1.19$ & $39.17{\pm}2.17$ & $\underline{23.96}{\pm}1.47$ & $49.05{\pm}0.37$ \\
\rowcolor[gray]{0.85}  & \textbf{\mbox{\projectname{} (Ours)}} & $\mathbf{81.22}{\pm}1.05$ & $\mathbf{56.25}{\pm}1.35$ & $\mathbf{40.00}{\pm}2.13$ & $23.89{\pm}0.35$ & $\mathbf{50.34}{\pm}0.75$ \\
\bottomrule
\end{tabular}
\end{table}

\begin{table}[H]
\caption{Mathematical reasoning unbiased pass@2 (\%).}
\label{tab:app-pass2}
\centering
\normalsize
\setlength{\tabcolsep}{0.25pt}
\renewcommand{\arraystretch}{0.96}
\setlength{\aboverulesep}{0.3ex}
\setlength{\belowrulesep}{0.35ex}
\begin{tabular}{p{\dimexpr0.14\textwidth-2\tabcolsep\relax}>{\raggedright\arraybackslash}p{\dimexpr0.20\textwidth-2\tabcolsep\relax}*{5}{>{\centering\arraybackslash}p{\dimexpr0.132\textwidth-2\tabcolsep\relax}}}
\toprule
 & & \multicolumn{5}{c}{\textbf{Math Reasoning} (pass@2)} \\
\cmidrule(lr){3-7}
\textbf{Model} & \textbf{Method} & AMC23 & AIME24 & AIME25 & HMMT25 & Avg. \\
\midrule
\multirow{7}{=}{\centering Qwen3-0.6B\\(Thinking)} & Base & $56.41{\pm}0.97$ & $14.05{\pm}1.10$ & $19.74{\pm}1.63$ & $11.48{\pm}0.17$ & $25.42{\pm}0.35$ \\
 & OPSD & $56.63{\pm}1.46$ & $16.39{\pm}1.23$ & $19.95{\pm}1.05$ & $\mathbf{12.79}{\pm}0.80$ & $26.44{\pm}0.76$ \\
 & SDPO & $56.11{\pm}0.10$ & $14.44{\pm}0.80$ & $20.19{\pm}1.88$ & $10.79{\pm}2.17$ & $25.38{\pm}0.70$ \\
 & GRPO & $56.75{\pm}0.96$ & $14.69{\pm}0.93$ & $\mathbf{22.94}{\pm}0.16$ & $11.56{\pm}1.07$ & $26.49{\pm}0.22$ \\
 & RLSD & $\mathbf{57.57}{\pm}2.43$ & $\underline{16.79}{\pm}0.73$ & $21.13{\pm}0.36$ & $10.72{\pm}0.40$ & $\underline{26.55}{\pm}0.39$ \\
 & \mbox{OPSD + EOPD} & $56.52{\pm}0.56$ & $14.85{\pm}2.14$ & $21.94{\pm}0.24$ & $11.03{\pm}1.75$ & $26.08{\pm}0.31$ \\
\rowcolor[gray]{0.85}  & \textbf{\mbox{\projectname{} (Ours)}} & $\underline{56.81}{\pm}0.77$ & $\mathbf{17.17}{\pm}0.82$ & $\underline{22.28}{\pm}0.54$ & $\underline{12.20}{\pm}1.39$ & $\mathbf{27.11}{\pm}0.37$ \\
\midrule
\multirow{7}{=}{\centering Qwen3-1.7B\\(Thinking)} & Base & $82.44{\pm}0.36$ & $59.14{\pm}1.90$ & $41.86{\pm}0.19$ & $25.96{\pm}1.24$ & $52.35{\pm}0.24$ \\
 & OPSD & $82.34{\pm}0.48$ & $58.31{\pm}1.94$ & $\underline{43.56}{\pm}1.89$ & $\underline{27.72}{\pm}1.12$ & $52.99{\pm}0.79$ \\
 & SDPO & $82.85{\pm}0.72$ & $59.87{\pm}2.21$ & $42.85{\pm}2.48$ & $26.75{\pm}2.21$ & $\underline{53.08}{\pm}0.70$ \\
 & GRPO & $81.76{\pm}0.23$ & $59.20{\pm}1.02$ & $42.74{\pm}1.09$ & $26.76{\pm}1.39$ & $52.62{\pm}0.57$ \\
 & RLSD & $\underline{83.14}{\pm}0.39$ & $\underline{61.35}{\pm}0.21$ & $41.59{\pm}2.17$ & $26.10{\pm}1.01$ & $53.05{\pm}0.66$ \\
 & \mbox{OPSD + EOPD} & $81.14{\pm}0.85$ & $60.64{\pm}1.71$ & $41.79{\pm}1.00$ & $25.77{\pm}1.21$ & $52.33{\pm}0.14$ \\
\rowcolor[gray]{0.85}  & \textbf{\mbox{\projectname{} (Ours)}} & $\mathbf{86.10}{\pm}0.28$ & $\mathbf{62.86}{\pm}0.63$ & $\mathbf{49.32}{\pm}0.98$ & $\mathbf{29.95}{\pm}1.08$ & $\mathbf{57.06}{\pm}0.52$ \\
\midrule
\multirow{7}{=}{\centering Qwen3-4B-\\Instruct-2507} & Base & $89.44{\pm}0.20$ & $71.11{\pm}0.17$ & $56.45{\pm}2.38$ & $33.00{\pm}0.93$ & $62.50{\pm}0.82$ \\
 & OPSD & $89.14{\pm}0.41$ & $\underline{71.94}{\pm}1.83$ & $56.09{\pm}1.29$ & $\mathbf{34.26}{\pm}1.42$ & $\underline{62.86}{\pm}0.68$ \\
 & SDPO & $88.79{\pm}0.19$ & $71.28{\pm}0.33$ & $55.66{\pm}2.04$ & $31.97{\pm}1.21$ & $61.92{\pm}0.71$ \\
 & GRPO & $89.26{\pm}0.55$ & $71.44{\pm}0.50$ & $54.86{\pm}2.06$ & $33.25{\pm}2.14$ & $62.20{\pm}0.19$ \\
 & RLSD & $89.54{\pm}0.83$ & $\mathbf{72.16}{\pm}0.19$ & $\underline{56.64}{\pm}0.95$ & $32.80{\pm}0.21$ & $62.78{\pm}0.17$ \\
 & \mbox{OPSD + EOPD} & $\underline{89.60}{\pm}0.40$ & $71.25{\pm}2.79$ & $55.30{\pm}1.00$ & $32.44{\pm}0.48$ & $62.14{\pm}0.64$ \\
\rowcolor[gray]{0.85}  & \textbf{\mbox{\projectname{} (Ours)}} & $\mathbf{90.92}{\pm}0.39$ & $70.65{\pm}0.76$ & $\mathbf{60.89}{\pm}1.48$ & $\underline{34.25}{\pm}0.65$ & $\mathbf{64.18}{\pm}0.33$ \\
\midrule
\multirow{7}{=}{\centering Olmo-3-7B-\\Instruct} & Base & $87.40{\pm}0.30$ & $63.84{\pm}0.65$ & $47.07{\pm}0.73$ & $28.08{\pm}0.92$ & $56.60{\pm}0.50$ \\
 & OPSD & $87.37{\pm}0.41$ & $\underline{63.90}{\pm}1.77$ & $46.72{\pm}1.64$ & $26.38{\pm}0.31$ & $56.09{\pm}0.93$ \\
 & SDPO & $87.38{\pm}0.13$ & $62.15{\pm}1.27$ & $\mathbf{49.31}{\pm}1.54$ & $\mathbf{29.88}{\pm}0.94$ & $\underline{57.18}{\pm}0.65$ \\
 & GRPO & $\mathbf{88.14}{\pm}0.74$ & $62.26{\pm}0.35$ & $46.99{\pm}0.76$ & $28.76{\pm}1.35$ & $56.54{\pm}0.22$ \\
 & RLSD & $87.53{\pm}0.45$ & $63.67{\pm}1.76$ & $48.24{\pm}0.81$ & $\underline{29.07}{\pm}0.34$ & $57.13{\pm}0.40$ \\
 & \mbox{OPSD + EOPD} & $87.00{\pm}0.28$ & $63.10{\pm}0.77$ & $48.56{\pm}2.67$ & $28.94{\pm}1.52$ & $56.90{\pm}0.54$ \\
\rowcolor[gray]{0.85}  & \textbf{\mbox{\projectname{} (Ours)}} & $\underline{87.92}{\pm}0.81$ & $\mathbf{67.39}{\pm}1.35$ & $\underline{48.86}{\pm}1.97$ & $28.86{\pm}0.86$ & $\mathbf{58.26}{\pm}0.59$ \\
\bottomrule
\end{tabular}
\end{table}

\begin{table}[H]
\caption{Mathematical reasoning unbiased pass@4 (\%).}
\label{tab:app-pass4}
\centering
\normalsize
\setlength{\tabcolsep}{0.25pt}
\renewcommand{\arraystretch}{0.96}
\setlength{\aboverulesep}{0.3ex}
\setlength{\belowrulesep}{0.35ex}
\begin{tabular}{p{\dimexpr0.14\textwidth-2\tabcolsep\relax}>{\raggedright\arraybackslash}p{\dimexpr0.20\textwidth-2\tabcolsep\relax}*{5}{>{\centering\arraybackslash}p{\dimexpr0.132\textwidth-2\tabcolsep\relax}}}
\toprule
 & & \multicolumn{5}{c}{\textbf{Math Reasoning} (pass@4)} \\
\cmidrule(lr){3-7}
\textbf{Model} & \textbf{Method} & AMC23 & AIME24 & AIME25 & HMMT25 & Avg. \\
\midrule
\multirow{7}{=}{\centering Qwen3-0.6B\\(Thinking)} & Base & $66.30{\pm}0.67$ & $19.70{\pm}2.15$ & $24.67{\pm}1.98$ & $15.02{\pm}0.31$ & $31.42{\pm}0.71$ \\
 & OPSD & $66.28{\pm}1.94$ & $23.39{\pm}2.33$ & $25.01{\pm}1.61$ & $\mathbf{16.84}{\pm}1.19$ & $32.88{\pm}1.12$ \\
 & SDPO & $65.01{\pm}0.57$ & $21.34{\pm}1.46$ & $25.35{\pm}2.85$ & $14.25{\pm}2.30$ & $31.49{\pm}0.91$ \\
 & GRPO & $\underline{66.38}{\pm}0.94$ & $20.94{\pm}2.31$ & $\mathbf{29.06}{\pm}0.48$ & $14.94{\pm}1.58$ & $32.83{\pm}0.58$ \\
 & RLSD & $\mathbf{66.82}{\pm}2.26$ & $\underline{23.79}{\pm}1.18$ & $27.00{\pm}0.90$ & $14.42{\pm}0.77$ & $\underline{33.01}{\pm}0.01$ \\
 & \mbox{OPSD + EOPD} & $65.42{\pm}0.67$ & $21.59{\pm}3.90$ & $27.59{\pm}0.92$ & $13.86{\pm}2.47$ & $32.11{\pm}0.75$ \\
\rowcolor[gray]{0.85}  & \textbf{\mbox{\projectname{} (Ours)}} & $65.85{\pm}0.28$ & $\mathbf{25.02}{\pm}1.33$ & $\underline{27.65}{\pm}0.67$ & $\underline{16.09}{\pm}1.30$ & $\mathbf{33.65}{\pm}0.16$ \\
\midrule
\multirow{7}{=}{\centering Qwen3-1.7B\\(Thinking)} & Base & $87.14{\pm}0.43$ & $68.65{\pm}2.28$ & $48.73{\pm}0.64$ & $31.31{\pm}2.08$ & $58.96{\pm}0.75$ \\
 & OPSD & $87.65{\pm}0.31$ & $67.16{\pm}2.89$ & $\underline{51.35}{\pm}2.24$ & $\underline{33.75}{\pm}1.32$ & $\underline{59.97}{\pm}1.17$ \\
 & SDPO & $87.72{\pm}0.87$ & $69.17{\pm}1.62$ & $49.76{\pm}2.75$ & $32.14{\pm}2.90$ & $59.69{\pm}0.87$ \\
 & GRPO & $86.83{\pm}0.35$ & $69.06{\pm}1.03$ & $49.83{\pm}0.49$ & $33.23{\pm}2.27$ & $59.74{\pm}0.75$ \\
 & RLSD & $\underline{88.02}{\pm}0.72$ & $\mathbf{70.87}{\pm}0.65$ & $49.51{\pm}1.84$ & $31.29{\pm}1.54$ & $59.92{\pm}0.48$ \\
 & \mbox{OPSD + EOPD} & $85.68{\pm}1.81$ & $\mathbf{70.87}{\pm}2.23$ & $49.64{\pm}1.00$ & $31.13{\pm}2.25$ & $59.33{\pm}0.40$ \\
\rowcolor[gray]{0.85}  & \textbf{\mbox{\projectname{} (Ours)}} & $\mathbf{90.91}{\pm}0.73$ & $\underline{70.54}{\pm}1.37$ & $\mathbf{57.71}{\pm}1.98$ & $\mathbf{34.83}{\pm}1.26$ & $\mathbf{63.50}{\pm}0.99$ \\
\midrule
\multirow{7}{=}{\centering Qwen3-4B-\\Instruct-2507} & Base & $91.67{\pm}0.12$ & $77.81{\pm}0.24$ & $64.22{\pm}2.16$ & $36.89{\pm}1.13$ & $67.65{\pm}0.64$ \\
 & OPSD & $91.36{\pm}0.77$ & $78.85{\pm}1.39$ & $64.51{\pm}1.05$ & $\mathbf{38.20}{\pm}2.27$ & $68.23{\pm}0.72$ \\
 & SDPO & $91.75{\pm}0.48$ & $78.45{\pm}1.12$ & $62.88{\pm}2.11$ & $36.11{\pm}1.10$ & $67.30{\pm}0.71$ \\
 & GRPO & $91.93{\pm}0.56$ & $\underline{79.07}{\pm}0.99$ & $61.84{\pm}2.34$ & $37.09{\pm}3.13$ & $67.48{\pm}0.45$ \\
 & RLSD & $\underline{91.99}{\pm}0.94$ & $\mathbf{79.73}{\pm}0.65$ & $\underline{64.67}{\pm}1.17$ & $36.79{\pm}0.69$ & $\underline{68.30}{\pm}0.27$ \\
 & \mbox{OPSD + EOPD} & $91.93{\pm}0.33$ & $77.64{\pm}3.05$ & $61.66{\pm}1.89$ & $36.41{\pm}1.15$ & $66.91{\pm}0.34$ \\
\rowcolor[gray]{0.85}  & \textbf{\mbox{\projectname{} (Ours)}} & $\mathbf{93.00}{\pm}0.52$ & $76.29{\pm}0.32$ & $\mathbf{67.94}{\pm}1.38$ & $\underline{38.04}{\pm}0.97$ & $\mathbf{68.82}{\pm}0.36$ \\
\midrule
\multirow{7}{=}{\centering Olmo-3-7B-\\Instruct} & Base & $90.69{\pm}0.33$ & $73.01{\pm}1.02$ & $55.42{\pm}1.90$ & $32.72{\pm}1.30$ & $62.96{\pm}0.74$ \\
 & OPSD & $90.76{\pm}0.72$ & $\underline{73.33}{\pm}1.87$ & $54.37{\pm}1.36$ & $29.51{\pm}0.54$ & $61.99{\pm}1.07$ \\
 & SDPO & $90.77{\pm}0.25$ & $72.20{\pm}1.28$ & $\mathbf{57.58}{\pm}1.42$ & $\mathbf{34.62}{\pm}1.90$ & $\underline{63.79}{\pm}0.86$ \\
 & GRPO & $\mathbf{91.90}{\pm}0.52$ & $70.90{\pm}0.98$ & $53.90{\pm}0.35$ & $33.33{\pm}2.45$ & $62.51{\pm}0.36$ \\
 & RLSD & $91.04{\pm}0.54$ & $72.45{\pm}1.66$ & $\underline{56.75}{\pm}0.87$ & $33.39{\pm}1.12$ & $63.41{\pm}0.63$ \\
 & \mbox{OPSD + EOPD} & $91.02{\pm}0.13$ & $71.88{\pm}0.18$ & $56.71{\pm}2.54$ & $\underline{33.55}{\pm}0.92$ & $63.29{\pm}0.63$ \\
\rowcolor[gray]{0.85}  & \textbf{\mbox{\projectname{} (Ours)}} & $\underline{91.25}{\pm}0.66$ & $\mathbf{76.34}{\pm}1.44$ & $56.06{\pm}1.78$ & $33.49{\pm}1.73$ & $\mathbf{64.28}{\pm}0.74$ \\
\bottomrule
\end{tabular}
\end{table}

\FloatBarrier

\subsection{Out-of-domain Evaluation}
\label{app:out-of-domain-results}

\begin{table}[H]
\caption{Out-of-domain evaluation on logical reasoning and code generation. ZebraLogic uses whole-puzzle correctness; best means are bold within each model and dataset.}
\label{tab:app-out-of-domain}
\centering
\normalsize
\setlength{\tabcolsep}{2pt}
\renewcommand{\arraystretch}{1.05}
\setlength{\aboverulesep}{0.3ex}
\setlength{\belowrulesep}{0.3ex}
\begin{tabular}{p{\dimexpr0.23\textwidth-2\tabcolsep\relax}>{\raggedright\arraybackslash}p{\dimexpr0.21\textwidth-2\tabcolsep\relax}*{4}{>{\centering\arraybackslash}p{\dimexpr0.14\textwidth-2\tabcolsep\relax}}}
\toprule
\textbf{Model} & \textbf{Method} & mean@16 & pass@2 & pass@4 & pass@8 \\
\midrule
\multicolumn{6}{c}{\textbf{Logic: ZebraLogic}} \\
\midrule
 & Base & $28.83{\pm}0.30$ & $38.93{\pm}0.35$ & $48.54{\pm}0.23$ & $56.47{\pm}0.63$ \\
\rowcolor[gray]{0.85} \cellcolor{white}\multirow{-2}{=}{Qwen3-0.6B (Thinking)} & \textbf{\mbox{\projectname{} (Ours)}} & $\mathbf{28.96}{\pm}0.21$ & $\mathbf{39.27}{\pm}0.32$ & $\mathbf{48.86}{\pm}0.53$ & $\mathbf{56.54}{\pm}0.71$ \\
\midrule
 & Base & $62.54{\pm}1.05$ & $69.14{\pm}0.99$ & $74.19{\pm}1.21$ & $78.26{\pm}1.32$ \\
\rowcolor[gray]{0.85} \cellcolor{white}\multirow{-2}{=}{Qwen3-1.7B (Thinking)} & \textbf{\mbox{\projectname{} (Ours)}} & $\mathbf{67.35}{\pm}0.43$ & $\mathbf{73.57}{\pm}0.62$ & $\mathbf{78.75}{\pm}0.84$ & $\mathbf{83.77}{\pm}0.98$ \\
\midrule
 & Base & $79.15{\pm}0.16$ & $85.65{\pm}0.23$ & $89.61{\pm}0.41$ & $92.35{\pm}0.64$ \\
\rowcolor[gray]{0.85} \cellcolor{white}\multirow{-2}{=}{Qwen3-4B-Instruct-2507} & \textbf{\mbox{\projectname{} (Ours)}} & $\mathbf{82.23}{\pm}0.21$ & $\mathbf{87.57}{\pm}0.10$ & $\mathbf{90.65}{\pm}0.35$ & $\mathbf{92.83}{\pm}0.82$ \\
\midrule
 & Base & $32.69{\pm}0.39$ & $42.38{\pm}0.37$ & $50.61{\pm}0.25$ & $56.56{\pm}0.05$ \\
\rowcolor[gray]{0.85} \cellcolor{white}\multirow{-2}{=}{Olmo-3-7B-Instruct} & \textbf{\mbox{\projectname{} (Ours)}} & $\mathbf{33.90}{\pm}0.43$ & $\mathbf{43.96}{\pm}0.53$ & $\mathbf{52.16}{\pm}0.71$ & $\mathbf{57.94}{\pm}0.95$ \\
\midrule
\multicolumn{6}{c}{\textbf{Coding: HumanEval}} \\
\midrule
 & Base & $52.55{\pm}0.31$ & $63.96{\pm}0.45$ & $72.60{\pm}0.34$ & $78.42{\pm}0.48$ \\
\rowcolor[gray]{0.85} \cellcolor{white}\multirow{-2}{=}{Qwen3-0.6B (Thinking)} & \textbf{\mbox{\projectname{} (Ours)}} & $\mathbf{53.09}{\pm}0.70$ & $\mathbf{64.26}{\pm}0.76$ & $\mathbf{72.84}{\pm}0.71$ & $\mathbf{79.01}{\pm}0.72$ \\
\midrule
 & Base & $83.75{\pm}0.21$ & $88.91{\pm}0.22$ & $91.55{\pm}0.17$ & $93.17{\pm}0.48$ \\
\rowcolor[gray]{0.85} \cellcolor{white}\multirow{-2}{=}{Qwen3-1.7B (Thinking)} & \textbf{\mbox{\projectname{} (Ours)}} & $\mathbf{84.50}{\pm}0.13$ & $\mathbf{89.12}{\pm}0.19$ & $\mathbf{91.79}{\pm}0.34$ & $\mathbf{93.77}{\pm}0.34$ \\
\midrule
 & Base & $83.96{\pm}0.16$ & $86.41{\pm}0.24$ & $88.37{\pm}0.36$ & $89.90{\pm}0.51$ \\
\rowcolor[gray]{0.85} \cellcolor{white}\multirow{-2}{=}{Qwen3-4B-Instruct-2507} & \textbf{\mbox{\projectname{} (Ours)}} & $\mathbf{84.76}{\pm}0.22$ & $\mathbf{87.13}{\pm}0.03$ & $\mathbf{88.92}{\pm}0.12$ & $\mathbf{90.67}{\pm}0.20$ \\
\midrule
 & Base & $78.18{\pm}0.51$ & $85.33{\pm}0.47$ & $89.68{\pm}0.44$ & $92.05{\pm}0.25$ \\
\rowcolor[gray]{0.85} \cellcolor{white}\multirow{-2}{=}{Olmo-3-7B-Instruct} & \textbf{\mbox{\projectname{} (Ours)}} & $\mathbf{78.66}{\pm}0.19$ & $\mathbf{85.78}{\pm}0.34$ & $\mathbf{90.14}{\pm}0.52$ & $\mathbf{92.52}{\pm}0.74$ \\
\bottomrule
\end{tabular}
\end{table}

\FloatBarrier

\subsection{Ablation Results}
\label{app:ablation-results}
\begin{table}[H]
\caption{Per-benchmark mean@16 with standard deviations for the component ablation on Qwen3-1.7B
Thinking (Figure~\ref{fig:ablation}(a)). Avg. equally weights the four benchmarks. Best means are bold and second-best distinct means underlined. AGD denotes adaptive gap damping.}
\label{tab:app-ablation-mean16}
\centering
\normalsize
\setlength{\tabcolsep}{4pt}
\begin{tabular}{lccccc}
\toprule
Method & AMC23 & AIME24 & AIME25 & HMMT25 & Avg. \\
\midrule
Base & $75.50$\,{$\pm0.28$} & $46.04$\,{$\pm1.78$} & $34.58$\,{$\pm0.29$} & $21.88$\,{$\pm0.78$} & $44.50$\,{$\pm0.27$} \\
OPSD & $74.87$\,{$\pm0.46$} & $45.97$\,{$\pm1.58$} & $35.00$\,{$\pm1.36$} & $22.71$\,{$\pm0.61$} & $44.64$\,{$\pm0.41$} \\
\;+\,EGT & $76.46$\,{$\pm0.99$} & $48.26$\,{$\pm1.21$} & $36.81$\,{$\pm1.49$} & $22.57$\,{$\pm0.20$} & $46.02$\,{$\pm0.40$} \\
\;+\,BER & $77.26$\,{$\pm0.38$} & $52.01$\,{$\pm1.13$} & $37.15$\,{$\pm0.94$} & $24.03$\,{$\pm0.35$} & $47.61$\,{$\pm0.64$} \\
\;+\,BER\,+\,AGD & $\underline{77.66}$\,{$\pm0.68$} & $\underline{52.15}$\,{$\pm1.58$} & $\mathbf{39.44}$\,{$\pm0.69$} & $\underline{24.31}$\,{$\pm1.83$} & $\underline{48.39}$\,{$\pm0.55$} \\
\rowcolor[gray]{0.85} \textbf{\projectname{} (Ours)} & $\mathbf{79.02}$\,{$\pm0.41$} & $\mathbf{52.29}$\,{$\pm0.45$} & $\underline{39.24}$\,{$\pm0.26$} & $\mathbf{25.00}$\,{$\pm0.90$} & $\mathbf{48.89}$\,{$\pm0.32$} \\
\bottomrule
\end{tabular}
\end{table}

\FloatBarrier

\clearpage
\section{Mechanism Probe Settings and Results}
\label{app:mechanism-probes}

\subsection{Rollout and Probe Settings}
\label{app:probe-settings}
Figures~\ref{fig:ablation}(b,c) use the same diagnostic batch of student
responses. We sample 256 distinct problems from the DeepMath level 5--7
training pool with seed 42 and generate eight responses per problem.
All three conditions use the same frozen Qwen3-1.7B base weights in Thinking
mode, with no parameter updates. The student receives only the original
problem; the reference-conditioned teacher additionally receives its reference
solution and answer; the EGT teacher instead receives one retrieved solved
neighbor. We retain the same problems and exemplar assignments when generating
the teachers' own responses.

\begin{table}[H]
\caption{Diagnostic rollout settings. Indices $i$ and $g$ start at zero.
The two teacher conditions share sampling seeds. Teacher generation budgets
are also limited by the remaining context window.}
\label{tab:probe-settings}
\centering
\normalsize
\setlength{\tabcolsep}{7pt}
\renewcommand{\arraystretch}{1.08}
\begin{tabular}{ll}
\toprule
Parameter & Value \\
\midrule
Problems / responses per problem & 256 / 8 \\
Sampling temperature & 1.0 \\
Top-$p$ / top-$k$ & 0.95 / 20 \\
Maximum response length & 16,384 tokens \\
Maximum total context length & 40,960 tokens \\
Student prompt length limit & 2,048 tokens \\
Student sampling seed & $42+i$ for problem index $i$ \\
Teacher sampling seed & $42+8i+g$ for response slot $g$ \\
Entropy and KL scoring temperature & 1.0 \\
\bottomrule
\end{tabular}
\end{table}

\paragraph{Teacher-context probe.}
For Figure~\ref{fig:ablation}(b), we score each teacher both on its own generated
response and on the corresponding student response. Entropy and
teacher-to-student forward KL are computed over the full vocabulary, without
top-$k$ truncation or clipping. We average over response tokens within each
trajectory, then equally over 2,047 matched response slots; one slot is
excluded because the EGT prompt and student response exceed the context window.
Response lengths use all 2,048 generations per condition and include generated
end-of-sequence tokens. Relative entropy increase is the difference between
the two mean entropies divided by the own-prefix mean, rather than an average
of per-response ratios.

\paragraph{AGD gradient probe.}
For Figure~\ref{fig:ablation}(c), we reuse all 2,048 student responses with the
reference-conditioned teacher. At each response position, we compute the
$\ell_2$ norm of the routed loss gradient with respect to the full student
logit vector, using teacher top-100 support plus a tail bucket and a
per-element cap of 0.05. We compare the same fixed distributions with AGD
off and on, stopping gradients through the routing decision and damping weight.
The gap scale is computed over valid tokens in fixed batches of 32 responses,
ordered by problem and response slot. Gradient shares are normalized within
each response, averaged over responses within a problem, and then over problems.
High entropy means $H\geq\ln 2$; the first and second letters in HH, HL, LH,
and LL refer to student and teacher entropy, respectively. This is an offline
logit-gradient probe, not a replay of parameter updates during training.

\subsection{Detailed Probe Results}
\label{app:probe-results}
Table~\ref{tab:teacher-context-probe} reports the teacher-context comparison.
EGT reduces forward KL on the same student
prefixes by 25.9\%. Own-prefix entropy and response length describe each
condition's separate trajectories; the forward-KL comparison uses shared
student prefixes.

\begin{table}[H]
\caption{Teacher-context probe underlying Figure~\ref{fig:ablation}(b).
Lengths use 2,048 responses per condition; entropy and KL
use 2,047 matched slots. Student zeros denote identical-prefix or self comparisons.}
\label{tab:teacher-context-probe}
\centering
\normalsize
\setlength{\tabcolsep}{5pt}
\renewcommand{\arraystretch}{1.08}
\begin{tabular}{lrrr}
\toprule
Metric & Student & Reference teacher & EGT teacher \\
\midrule
Mean response length (tokens) & 9,139.7 & 835.6 & 2,089.5 \\
Own-prefix entropy (nats) & 0.2679 & 0.1900 & 0.2605 \\
Forward KL to student (nats) & 0 & 0.1549 & 0.1148 \\
\bottomrule
\end{tabular}
\end{table}

Table~\ref{tab:agd-gradient-probe} reports all four entropy buckets for
Figure~\ref{fig:ablation}(c). HL and LH together contain 1,436,048 of
18,718,121 response tokens (7.67\%). Their gradient shares increase by
3.43 and 4.89 percentage points, respectively. 

\begin{table}[H]
\caption{AGD gradient probe on 2,048 fixed student responses.
Token shares pool all response tokens; gradient shares use the response- and
problem-level normalization described in Appendix~\ref{app:probe-settings}.
Changes are computed from the displayed gradient shares.}
\label{tab:agd-gradient-probe}
\centering
\normalsize
\setlength{\tabcolsep}{7pt}
\renewcommand{\arraystretch}{1.08}
\begin{tabular}{lrrrr}
\toprule
Bucket & Token share (\%) & AGD off (\%) & AGD on (\%) & Change (pp) \\
\midrule
HH & 14.40 & 45.57 & 43.26 & $-2.31$ \\
HL & 2.33 & 8.99 & 12.42 & $+3.43$ \\
LH & 5.34 & 11.19 & 16.08 & $+4.89$ \\
LL & 77.93 & 34.25 & 28.24 & $-6.01$ \\
\bottomrule
\end{tabular}
\end{table}

\paragraph{Distributional agreement at strongly damped positions.}
\label{app:agd-overlap}
Using the same reference-teacher gradient probe, we compute full-vocabulary
probability-mass overlap $O_t=\sum_v\min\{p_t(v),q_t(v)\}$ on student
prefixes. Table~\ref{tab:agd-overlap} reports token-pooled statistics
for strongly damped positions ($c_t\leq0.1$).

\begin{table}[H]
\caption{Overlap at strongly damped positions. Percentages are computed within each row.}
\label{tab:agd-overlap}
\centering
\normalsize
\setlength{\tabcolsep}{6pt}
\renewcommand{\arraystretch}{1.08}
\begin{tabular}{lrrr}
\toprule
Subset & Positions & Mean overlap & $O_t<0.5$ (\%) \\
\midrule
All strongly damped positions & 12,708,962 & 0.9848 & 0.380 \\
\quad with $H(p_t)\geq\ln 2$ & 766,682 & 0.8343 & 3.662 \\
\bottomrule
\end{tabular}
\end{table}

Positions with both $c_t\leq0.1$ and $O_t<0.5$ account for
0.464\% of $\sum_t g_t$ pooled over all response tokens, where
$g_t=\|\nabla_{z_t}D_t^{\mathrm{BER}}\|_2$ is the logit-gradient norm
before damping.
\FloatBarrier

\clearpage
\section{Entropy Gap on a Shared Candidate Set}
\label{app:entropy-gap-theory}

This appendix gives the precise conditions and proof for the proposition
motivating the entropy-gap routing signal in Section~\ref{sec:rein-ead}.
At a fixed response position, write $p=p_t$ and $q=q_t$ for the student
and teacher distributions over a finite vocabulary
$\mathcal V$, with entropy $H(r)=-\sum_{v\in\mathcal V}r(v)\log r(v)$.
All logarithms are natural, with $0\log 0=0$.

\paragraph{Proposition (Entropy-Gap Stability).}
Let $S\subseteq\mathcal V$ be a nonempty shared candidate set satisfying
$p(S),q(S)\geq 1-\eta$ for $0\leq\eta\leq 1/2$.
For $r\in\{p,q\}$, define its restriction to $S$ by
$r^S(v)=r(v)/r(S)$ for $v\in S$. Write
\[
\Delta=H(p)-H(q),
\qquad
\Delta^S=H(p^S)-H(q^S).
\]
Then
\[
|\Delta-\Delta^S|\leq B_\eta,
\qquad
B_\eta=2h(\eta)+2\eta\log|\mathcal V|,
\]
where $h(a)=-a\log a-(1-a)\log(1-a)$ is binary entropy.
Consequently, if $|\Delta|>B_\eta$, then
$\operatorname{sign}(\Delta)=\operatorname{sign}(\Delta^S)$.

\begin{proof}
For either distribution $r$, let $\alpha_r=r(S^c)\leq\eta$.
Decomposing entropy according to membership in $S$ gives
\[
H(r)=h(\alpha_r)+(1-\alpha_r)H(r^S)
     +\alpha_r H(r^{S^c}),
\]
where $r^{S^c}$ is the normalized restriction to the complement; the last
term is defined as zero when $\alpha_r=0$.
Since both conditional entropies lie between $0$ and $\log|\mathcal V|$,
and $h$ is increasing on $[0,1/2]$,
\[
|H(r)-H(r^S)|
\leq h(\alpha_r)+\alpha_r\log|\mathcal V|
\leq h(\eta)+\eta\log|\mathcal V|.
\]
Applying this bound to $p$ and $q$ and using the triangle inequality yields
$|\Delta-\Delta^S|\leq B_\eta$.
If $|\Delta|>B_\eta$, this perturbation cannot change its sign.
\end{proof}

\paragraph{Interpretation.}
The quantity $\exp(H(r^S))$ measures the effective number of alternatives
under $r$ within $S$. Thus, when $|\Delta|>B_\eta$, a negative entropy gap
indicates fewer effective alternatives under $p$ than under $q$ within
the shared set, while a positive gap indicates more. For a fixed finite
vocabulary, the error bound vanishes as $\eta\to 0$.

\paragraph{Empirical check.}
Using the initial Qwen3-1.7B Thinking model, we randomly select 64 DeepMath
level 5--7 problems, one saved student response per problem, and 128 token positions per response
(8,192 positions in total). The frozen EGT teacher scores the same student
prefixes. At each position, $S$ is the union of the student's and teacher's
top-$K$ tokens, and $\eta=\max\{p(S^c),q(S^c)\}$.
This set is used only for analysis; the training loss uses teacher
top-$K_{\mathrm v}$ tokens plus a tail bucket.
Probability masses and entropies use FP64 arithmetic on BF16 model logits.

\begin{table}[H]
\caption{Shared-candidate diagnostics (\% of the 8,192 sampled positions).
Mass coverage requires $p(S),q(S)\geq99.9\%$; condition coverage requires
$\eta\leq1/2$ and $|\Delta|>B_\eta$; sign agreement compares $\Delta$ and $\Delta^S$.}
\label{tab:entropy-gap-conditions}
\centering
\normalsize
\setlength{\tabcolsep}{10pt}
\renewcommand{\arraystretch}{1.08}
\begin{tabular}{lrrr}
\toprule
Top-$K$ & Mass coverage & Condition coverage & Sign agreement \\
\midrule
50 & 98.14 & 98.61 & 99.90 \\
100 (main) & 99.19 & 99.13 & 99.94 \\
200 & 99.65 & 99.39 & 99.95 \\
\bottomrule
\end{tabular}
\end{table}

Table~\ref{tab:entropy-gap-conditions} shows that the sufficient conditions
hold at over 98\% of positions across all three candidate-set sizes.
At $K=100$, all five sign disagreements occur outside the 8,121 positions
satisfying these conditions. These results support the proposition's
applicability in this initial-model, fixed-prefix probe.

\end{document}